\newcommand\modelfont[1]{\textsc{#1}}
\newcommand{\ourmethod}{\modelfont{DiveBatch}\xspace}
\newcommand{\oracle}{\modelfont{Oracle}\xspace}
\title{\ourmethod: Accelerating Model Training Through Gradient-Diversity Aware Batch Size Adaptation}
\author{
    ANONYMOUS AUTHOR(S)
}
\author{
Yuen Chen
\and
Yian Wang\and
Hari Sundaram\\
\affiliations
University of Illinois at Urbana-Champaign
\emails
\{yuenc2, yian3, hs1\}@illinois.edu
}
\begin{document}

\maketitle
\begin{abstract}
The goal of this paper is to accelerate the training of machine learning models, a critical challenge since the training of large-scale deep neural models can be computationally expensive. Stochastic gradient descent (SGD) and its variants are widely used to train deep neural networks. In contrast to traditional approaches that focus on tuning the learning rate, we propose a novel adaptive batch size SGD algorithm, \ourmethod, that dynamically adjusts the batch size. Adapting the batch size is challenging: using large batch sizes is more efficient due to parallel computation, but small-batch training often converges in fewer epochs and generalizes better. To address this challenge, we introduce a data-driven adaptation based on \textit{gradient diversity}, enabling \ourmethod to maintain the generalization performance of small-batch training while improving convergence speed and computational efficiency. Gradient diversity has a strong theoretical justification: it emerges from the convergence analysis of SGD. Evaluations of \ourmethod on synthetic and \textsc{CiFar}-10, \textsc{CiFar}-100, and \textsc{Tiny-ImageNet} demonstrate that \ourmethod converges significantly faster than standard SGD and AdaBatch (1.06 -- 5.0$\times$), with a slight trade-off in performance.
\end{abstract}
\section{Introduction}\label{sec:intro}

This paper aims to accelerate the training of machine learning models. The problem is important: training deep learning models can be computationally expensive with increasingly large datasets and model sizes. Significantly accelerating time to convergence can lead to substantial savings in computational resources and time, allowing practitioners to iterate more quickly and experiment with more models.

Training a model in a supervised learning setting requires solving an optimization problem that minimizes empirical risk. In large-scale machine learning, mini-batch stochastic gradient descent (SGD) and its variants are widely used methods for solving this problem due to the impracticality of computing gradients over the entire dataset~\citep{bottou_large-scale_2010, bottou_optimization_2018}. Accelerating the convergence speed of these optimization algorithms remains a key research focus. The choice of batch sizes significantly influences the performance and convergence speed of SGD. Larger batch sizes improve efficiency by enabling parallel gradient computations, resulting in faster iteration per epoch and better utilization of modern hardware capabilities. In contrast, small-batch training often converges in fewer epochs and achieves better generalization, as demonstrated in prior work~\citep{das_distributed_2016, keskar_improving_2017, masters_revisiting_2018}. These trade-offs present a significant challenge when selecting the appropriate batch size for optimizing SGD.

Previous studies have proposed heuristics for adaptive batch size methods to achieve the generalization performance of small-batch training without sacrificing the efficiency of large-batch training.\ \citet{smith_dont_2018} and \citet{devarakonda_adabatch_2018} demonstrate that starting with small-batch size and progressively increasing it during training can yield similar test performance to small-batch training. This approach allows models to benefit from the computational efficiency of large-batch training in the later stages while maintaining good generalization. However, these methods rely on heuristics and lack a theoretical foundation for determining batch size adjustments, leaving room for more principled approaches.

In this paper, we propose \ourmethod, an adaptive batch size scheme grounded in gradient diversity theory~\citep{yin_gradient_2018}. \ourmethod is a data-driven approach that dynamically adjusts the batch size based on the data characteristics. At the end of each epoch, we update the batch size to be proportional to the gradient diversity of the model parameters.

We investigate the role of gradient diversity in SGD convergence through experiments on synthetic and real-world datasets for both convex and nonconvex optimizations. 
These experiments demonstrate that starting with small-batch size and then using gradient diversity to adjust the batch size during training can significantly accelerate convergence. We show that \ourmethod outperforms fixed large-batch SGD while matching the performance of small-batch SGD.
Further, we compare \ourmethod against adaptive batch size methods proposed by \citet{devarakonda_adabatch_2018} through training ResNet-20~\citep{he2016deep}. Our results show that \ourmethod converges to the final result faster than Adabatch and small-batch SGD. To summarize, our main contributions are:

\begin{itemize}
    \item \textbf{Data-adaptive batch size Tuning:} To the best of our knowledge, \ourmethod is the first algorithm to leverage gradient diversity for tuning batch size.\ Unlike prior state-of-the-art methods that rely on heuristic schedules for batch size adjustment. \ourmethod introduces an innovative and adaptive approach where the batch size is adjusted in proportion to the gradient diversity. 
    \item \textbf{Accelerated Convergence:} \ourmethod achieves accelerated convergence compared to fixed-batchsize SGD and state-of-the-art adaptive batch size method~\citep{devarakonda_adabatch_2018} while maintaining comparable accuracy. It achieves within 1\% of the final performance at a speed 1.06 to 5$\times$ faster than small-batch SGD and AdaBatch, allowing researchers to iterate rapidly over different potential models in the early training stage.
\end{itemize}

\section{Background}

Consider a dataset consisting of $n$ training samples  $\mathcal{S} = \{\mathbf{z}_1, \dots, \mathbf{z}_n\}$ i.i.d. sampled from an unknown distribution $\mathcal{D}$ over the sample space $\mathcal{Z}$. The goal is to learn a model parameter $\boldsymbol{\theta} \in \mathbb{R}^d$ to minimize the \textit{expected risk} of the model on the distribution $\mathcal{D}$, $\mathcal{L}\left(\boldsymbol{\theta} \right) := \mathbb{E}_{\mathbf{z} \sim \mathcal{D}}\left[ \ell \left(\boldsymbol{\theta}; \mathbf{z}\right)\right]$, where $\ell$ is the sample loss function. As $\mathcal{D}$ is unknown, we instead minimize the empirical risk on the training samples,
\begin{equation}\label{eq:erm}
    \min_{\boldsymbol{\theta} \in \mathbb{R}^d} \left\{ \mathcal{L}_\mathcal{S} \left( \boldsymbol{\theta} \right) = \frac{1}{n}\sum_{i = 1}^n \ell\left(\boldsymbol{\theta}; \mathbf{z}_i\right) \right\}.
\end{equation}

In minibatch SGD, we estimate the gradient of the entire dataset $\nabla \mathcal{L}_{\mathcal{S}}(\boldsymbol{\theta})$ using the gradient of a minibatch of data points sampled from $\mathcal{S}$, $\frac{1}{|\mathcal{B}|} \sum _{i \in \mathcal{B}}\nabla_{\boldsymbol{\theta}} \ell(\boldsymbol{\theta}; \mathbf{z}_i)$, where $\mathcal{B}$ denote a minibatch of indices and $|\mathcal{B}| << n$.

\subsection{Effect of Mini-batch Size}
Assume $\boldsymbol{z}_i \overset{\text{i.i.d.}}{\sim} \mathcal{D}$ for $i \in [n]$, a mini-batch $\mathcal{B}$ of size $m$ sampled uniformly at random from $[n]$, and that the sample gradient with respect to $\boldsymbol{\theta}$ has finite covariance, i.e., $\Sigma\left( \nabla_{\boldsymbol{\theta}} \ell(\boldsymbol{\theta}; \boldsymbol{z}_i)\right)$ (abbreviated as $\Sigma$).

The covariance of $g(\boldsymbol{\theta})$ is given by $\frac{\Sigma}{m}$, which decreases linearly as $m$ increases. Conversely, the number of gradient computations increases linearly with $m$. Thus, the mini-batch size $m$ plays a critical role in balancing the trade-off between convergence speed and computation costs. When $m = 1$, we recover SGD, which has the highest variance but the lowest computational cost per iteration. Conversely, when $m = n$, we recover gradient descent, which has the smallest variance but the highest computation cost per iteration.

Although most theoretical analyses assume that mini-batches are sampled i.i.d. with replacement, practical training of a model using mini-batch SGD differs. In practice, the dataset $\mathcal{S}$ is first partitioned into $\lceil \frac{n}{m} \rceil$ minibatches. One complete pass through these partitions, referred to as one \textit{epoch}, ensures that all data points are seen once. 

In a distributed setting, the $m$ samples within a mini-batch can be computed in parallel. Therefore, using a small-batch size $m$ leads to a large number of mini-batches, as $\lceil \frac{n}{m} \rceil$ is large, and these mini-batches must be processed sequentially, limiting the benefit from parallelization. On the other hand, large-batch sizes are more efficient, as each epoch requires fewer sequential computations. However, large-batch SGD is less robust as it often finds sharp minima~\citep{keskar_large-batch_2017}, leading to performance degradation on the test dataset.

\subsection{Gradient Diversity}
Ideally, one would hope that mini-batch SGD achieves linear speed up compared to SGD, as the computation costs also increase linearly. The theoretical analysis in \citet{yin_gradient_2018} shows that such linear convergence is dominated by a quantity termed \textit{Gradient Diversity}.

\begin{definition}[gradient diversity~\citep{yin_gradient_2018}]\label{def:gradient_diversity}
The gradient diversity of a dataset $\mathcal{S}$ with respect to $\boldsymbol{\theta}$ is :

\begin{equation*}
    \Delta_\mathcal{S} \left(\boldsymbol{\theta} \right) := \frac{\sum_{i = 1}^n \|\nabla_{\boldsymbol{\theta}} \ell \left(\boldsymbol{\theta}; \boldsymbol{z}_i \right)\|^2_2}{\left\|\sum_{i = 1}^n \nabla_{\boldsymbol{\theta}} \ell \left(\boldsymbol{\theta}; \boldsymbol{z}_i \right)\right\|^2_2}
\end{equation*}
\end{definition}

In particular, \citet{yin_gradient_2018} shows if the batch size $m$ is proportional to $n \Delta_{\mathcal{S}} \left(\boldsymbol{\theta} \right)$, given fixed $\boldsymbol{\theta}^{t}$, one mini-batch SGD update reduces distance between $\boldsymbol{\theta}^{t+1}$ and the minimizer $\boldsymbol{\theta}^*$ by approximately $m$ times more than one SGD update. We leave the details of this result to \cref{app:yin_lemma1} in the appendix.

In mini-batch SGD, only the gradients of a subset of $\mathcal{S}$ are computed, making $\Delta_{\mathcal{S}}\left(\boldsymbol{\theta} \right)$ generally inaccessible. Another challenge arises because, when computing $\boldsymbol{\theta}^{t+1}$, we evaluate the gradients with respect to $\boldsymbol{\theta}^t$. Tuning the batch size based on $\Delta_{\mathcal{S}}\left(\boldsymbol{\theta} \right)$ would require knowing the gradients at $\boldsymbol{\theta}^t$ before the gradient evaluation begins, leading to a circular dependency. Therefore, to the best of our knowledge, no previous works have used gradient diversity for batch size tuning.

\section{Related Work}
Stochastic gradient descent (SGD) has been an active research area since its inception by \citet{robbins_stochastic_1951}, owing to its its computation efficiency and applicability to large-scale problems. Various efforts have focused on improving SGD-based algorithms to speed up convergence and optimize resource utilization.\ A key focus of SGD research is the selection of hyperparameters, particularly the learning rates and batch sizes.

\paragraph{Adaptive Learning Rates}
Classic convergence analysis of SGD relies on a sequence of learning rates satisfying the Robbins-Monro conditions,

\begin{equation}
    \sum_{t=1}^{\infty} \eta_t = \infty \quad
    \text{and} \quad \sum_{t=1}^{\infty} \eta^2_t < \infty.
\end{equation} Recent advances in SGD have predominantly focused on adjusting the learning rate throughout the training process~\citep{duchi_adaptive_2011, horvath_adaptive_2022, bottou_optimization_2018, hinton_neural_2012, kingma_adam_2017, li_convergence_2019, zeiler_adadelta_2012} while fixing the batch size. Typically, the learning rate decreases over time, so that at the early stage of training, the model can first find a neighborhood of local minimum with fewer iterations. At the later stage of training, the model can converge more precisely by taking smaller steps, thereby avoiding overshooting the minimum. While some methods, such as \citet{horvath_adaptive_2022}, use gradient diversity to tune the learning rate, these approaches are orthogonal to ours. \ourmethod focuses on adapting the batch size dynamically, rather than adjusting the learning rate. As such, comparisons with adaptive learning rate methods fall outside the scope of this work. Nevertheless, \ourmethod could be combined with such techniques in future research to further accelerate the training process.

However, as \citet{johnson_accelerating_2013} noted, decreasing the learning rate can slow the convergence of SGD algorithms. Conversely, large-batch training is often desirable in practice because it reduces the variance of gradient estimates and allows for greater parallelization across multiple machines, leading to fewer parameter updates needed to train a model.

Despite its practical advantages, training with large, fixed batch sizes often results in finding sharp minima, which can which can degrade generalization performance on test data~\citep{masters_revisiting_2018, keskar_large-batch_2017, lau_adadagrad_2024}. Fortunately, there exists a duality between learning rate and batch sizes; \citet{goyal_accurate_2017} suggests that when the batch size is increased by a factor of $\alpha$, the learning rate should be scaled by $\alpha$ to maintain the same \textit{effective learning rate}. \citet{smith_dont_2018} demonstrated that one can match learning curves of learning rate decay by gradually increasing batch sizes, showing the possibility of improving training efficiency by increasing batch sizes without sacrificing generalization performances.

\paragraph{Adaptive Batch Sizes}
Building on this insight, adaptive batch size methods aim to address the issues associated with large-batch training and the slowing down of convergence due to decreased learning rates. Instead of decaying the learning by $\alpha$  (for some $0 < \alpha < 1$), \citet{smith_dont_2018} propose scaling the batches by $\frac{1}{\alpha}$, while \citet{devarakonda_adabatch_2018} both decreases learning rates and increase the batch sizes. Beyond mimicking the training trajectory of the decaying learning rate, another approach is to adjust batch sizes according to, for example, the variance of the stochastic gradients~\citep{balles_coupling_2017}, gradient noises~\citep{mccandlish_empirical_2018}, or gradient cosine similarities~\citep{qin_simigrad_2021}. We propose using gradient diversity to dynamically adjust the batch sizes. The advantage of this approach is that gradient diversity is a quantity that emerges from the convergence analysis of SGD and is therefore theoretically grounded. Additionally, techniques like quantization~\citep{alistarh_qsgd_2017} and stochastic gradient Langevin dynamics~\citep{welling_bayesian_2011} can be seamlessly integrated with our method to increase gradient diversity~\citep{yin_gradient_2018}, facilitating the use of larger batch sizes and improving overall training efficiency.

\section{Methodology}
Motivated by the convergence analysis building upon gradient diversity, we propose \ourmethod, a mini-batch SGD algorithm where the batch sizes are updated based on the estimated gradient diversity over mini-batches of the current epoch.

\paragraph{Notations} We denote one pass through the training dataset as an \emph{epoch}. Given a mini-batch size $m_k$, the $k^{th}$ epoch involves $\lceil \frac{n}{m_k} \rceil$ parameter updates (iterations). Let $t_k := \sum_{j = 1}^{k-1} \lceil \frac{n}{m_j} \rceil$ be the number of parameter updates that have already occurred, and $\boldsymbol{\theta}^{t_k}$ the parameter at the start of epoch $k$.

For the $j^{th}$ parameter update within epoch $k$, we store the sum of individual gradients' $L_2$-norm, $\sum_{i \in \mathcal{B}_j} \|\nabla_{\boldsymbol{\theta}} \ell \left(\boldsymbol{\theta}^{t_k + j-1}; \boldsymbol{z}_i\right)\|^2_2$, and the sum of the gradients $\sum_{i \in \mathcal{B}_j} \nabla_{\boldsymbol{\theta}} \ell \left(\boldsymbol{\theta}^{t_k + j-1}; \boldsymbol{z}_i\right)$. At the end of epoch $k$, we accumulate the sum of gradients and the sum of gradient norms to estimate the gradient diversity of the parameter after finishing epoch $k$, i.e., $\Delta_\mathcal{S} \left(\boldsymbol{\theta}^{t_{k + 1}}\right) $, 

\begin{definition}[Estimated Gradient Diversity]
Let subset $\mathcal{B}_1, \dots, \mathcal{B}_{\lceil \frac{n}{m_k} \rceil}$ be the mini-batches of epoch $k$.  We define the estimated the gradient diversity w.r.t. $\boldsymbol{\theta}^{t_k + \lceil \frac{n}{m_k} \rceil}$, i.e., $\boldsymbol{\theta}^{t_{k + 1}}$:
\begin{equation*}
    \hat{\Delta}_{\mathcal{S}}\left(\boldsymbol{\theta}^{t_{k + 1}}\right) := \frac{\sum_{j=1}^{{\lceil \frac{n}{m_k} \rceil}} \sum_{i \in \mathcal{B}_j} \|\nabla_{\boldsymbol{\theta}} \ell \left(\boldsymbol{\theta}^{t_k + j-1}; \boldsymbol{z}_i\right)\|^2_2}{\|\sum_{j=1}^{{\lceil \frac{n}{m_k} \rceil}} \sum_{i \in \mathcal{B}_j} \nabla_{\boldsymbol{\theta}} \ell \left(\boldsymbol{\theta}^{t_k + j-1}; \boldsymbol{z}_i\right)\|^2_2}.
\end{equation*}
\end{definition}
We update the batch size to be proportional to $\hat{\Delta}_{\mathcal{S}}\left(\boldsymbol{\theta}^{t_{k + 1}}\right)$. The full algorithm is in Algorithm~\ref{alg:ouralgo}.

\begin{algorithm}
    \caption{\ourmethod}\label{alg:ouralgo}
    \begin{algorithmic}[1]
    \State \textbf{Input:} Dataset $\mathcal{S}$ of size $n$, initial point $\boldsymbol{\theta}_0$, learning rate $\eta$, initial batch size $m_0$,  number of epochs $K$, maximum batch size $m^{max}$, $\delta$ from \ref{app:yin_lemma1}.
    \State \textbf{Output: Optimized $\boldsymbol{\theta}$}
    \For{$k = 0, \dots, K-1$}
        \For {$j = 1, \dots, \lceil \frac{n}{m_k} \rceil$}
        \State Sample mini-batch $\mathcal{B}_j$ of size $m$ from $\mathcal{D}$
        \State $\nabla_{\boldsymbol{\theta}} \mathcal{L}_{\mathcal{B}_j}(\boldsymbol{\theta}^{t_k + j - 1}) \leftarrow \sum_{i \in \mathcal{B}_j} \nabla_{\boldsymbol{\theta}} \ell(\boldsymbol{\theta}^{t_k + j - 1}; \boldsymbol{z}_i)$
        \State $G \leftarrow \sum_{i \in \mathcal{B}_j} \|\nabla_{\boldsymbol{\theta}} \ell(\boldsymbol{\theta}^{t_k + j - 1}); \boldsymbol{z}_i)\|^2_2$
        \State $\boldsymbol{\theta}^{t_k + j} = \boldsymbol{\theta}^{t_k + j - 1} - \frac{\eta}{m_k}\nabla_{\boldsymbol{\theta}} \mathcal{L}_{\mathcal{B}_j}(\boldsymbol{\theta}^{t_k + j - 1})$
            \EndFor
        \State $\hat{\Delta}_{\mathcal{S}}(\boldsymbol{\theta}^{t_{k + 1}}) \leftarrow \frac{\sum_{j=1}^{\lceil \frac{n}{m_k} \rceil} \sum_{i \in \mathcal{B}_j} \|\nabla_{\boldsymbol{\theta}} \ell(\boldsymbol{\theta}^{t_k + j - 1}; \boldsymbol{z}_i)\|^2_2}{\|\sum_{j=1}^{\lceil \frac{n}{m_k} \rceil} \sum_{i \in \mathcal{B}_j} \nabla_{\boldsymbol{\theta}} \ell(\boldsymbol{\theta}^{t_k + j - 1}; \boldsymbol{z}_i)\|^2_2}$
            \State $m_{k+1} \leftarrow \min \left\{m^{max}, \delta \cdot {n} \cdot \hat{\Delta}_{\mathcal{S}}\left(\boldsymbol{\theta}^{t_{k + 1}}\right) \right\}$

    \EndFor
    \end{algorithmic}
\end{algorithm}

\section{Experiments}
We validate \ourmethod through experiments on both synthetic and three popular real-world datasets: 
\textsc{CiFar}-10, \textsc{CiFar}-100~\citep{krizhevsky2009learning}, and \textsc{Tiny-ImageNet}~\citep{le_tiny_nodate}. With synthetic experiments, we aim to understand interactions among gradient diversity approximation, batch sizes, and generalization performance. In experiments on real-world datasets, we compare the performance of \ourmethod against minibatch SGD with fixed batch sizes and AdaBatch~\citep{devarakonda_adabatch_2018}. 

We implement \ourmethod using PyTorch version 2.5.1 with GPU acceleration.
Our experimental platform consists of 4 NVIDIA Tesla A100 GPUs.

\begin{figure*}[ht!]
    \centering
    \begin{subfigure}[b]{0.3\textwidth}
        \centering
        \includegraphics[width=\textwidth]{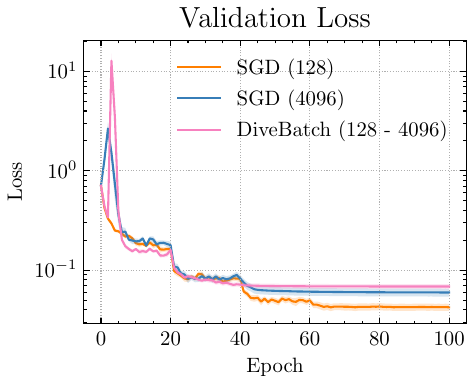}
        \label{fig:val_losses}
    \end{subfigure}\hspace{15pt}
    \begin{subfigure}[b]{0.3\textwidth}
        \centering
        \includegraphics[width=\textwidth]{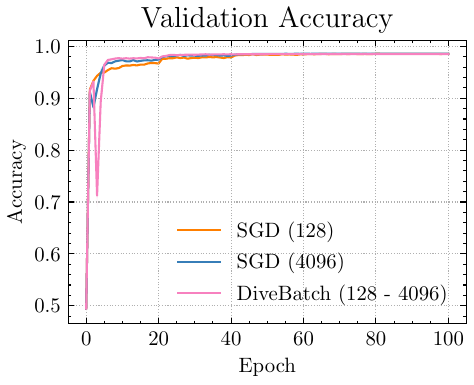}
        \label{fig:val_accs}
    \end{subfigure}

    \begin{subfigure}[b]{0.3\textwidth}
        \centering
        \includegraphics[width=\textwidth]{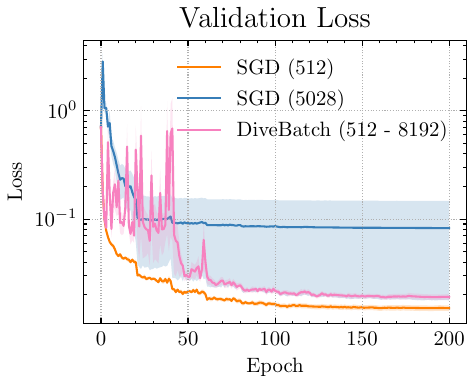}
        \label{fig:nonconvex_val_losses}
    \end{subfigure}\hspace{15pt}
        \begin{subfigure}[b]{0.3\textwidth}
        \centering
        \includegraphics[width=\textwidth]{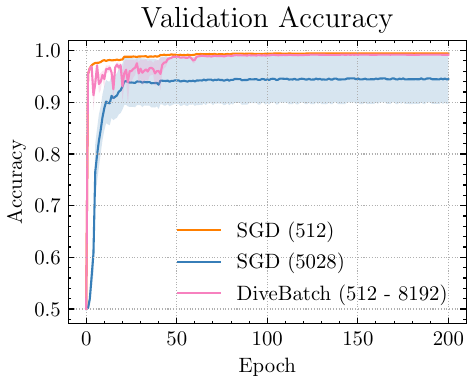}
        \label{fig:nonconvex_val_accs}
    \end{subfigure}
    \caption{Validation losses and accuracies of logistic regression models under convex and nonconvex setting. \textbf{Convex case} \textit{(Top)}: In the parentheses are the batch sizes for SGD and the initial batch size - end batch size for \ourmethod. \ourmethod shows a faster stabilization in validation loss compared to SGD (128) and SGD (4096). Validation accuracy for \ourmethod is similar to SGD. \textbf{Nonconvex case} \textit{(Bottom)}: With \ourmethod, we can increase the batch size to 5028 without much performance degradation, whereas for SGD, using a fixed batch size of 4096 results in significantly worse validation loss and accuracies compared to training with a batch size of 512.}

    \label{fig:combined_results}
\end{figure*}

\subsection{Synthetic Experiments}
\subsubsection{Setup} The synthetic dataset is generated using a linear combination of features with added random noise. For a single data point, we first create a feature vector $x$ by drawing each feature component uniformly from \([-1, 1]^d\). This feature vector is then multiplied by a randomly generated weight vector \(\mathbf{w}^* \in \mathbb{R}^d\), drawn from a standard normal distribution, to create a linear combination. Random noise, scaled by a specified noise factor, is added to this linear combination to simulate real-world data imperfections. The resulting value is passed through a sigmoid function to obtain a probability. The label is then generated by thresholding this probability at 0.5: if the probability is greater than 0.5, the label is set to 1; otherwise, it is set to 0. Mathematically, we express the process as:
\begin{equation}\label{eq:data_gen}
y = \mathds{1} \left\{\sigma\left( \mathbf{w}^* \cdot \mathbf{x}  + \epsilon \right) > 0.5 \right\} \text{, } \epsilon \sim \mathcal{N}\left(\boldsymbol{0}, 0.1 \cdot \boldsymbol{I}\right)
\end{equation} 

We set $d = 512$ and generate 20,000 data points, dividing them into training (80\%) and validation (20\%) sets.

\paragraph{Models} We run two model architectures to cover both convex and nonconvex optimization. For the convex case, we train logistic regression models with dimension $d$, consistent with the data generation, to learn $\boldsymbol{\theta}^* := \left\{\mathbf{w}^*, b^* \right\}$ where $b^* = 0$. As for the non-convex case, we train 2-layer MLPs with the same number of parameters as the logistic regression models. As the synthetic experiments aim to understand the behavior of \ourmethod, we use SGD as our baseline.

\paragraph{Hyperparameters} We begin by performing a grid search over the learning rates $\{0.001, 0.01, 0.1, 1, 2, 4, 8, 16\}$ and batch sizes $\{32, 64, 128, 256, 512\}$ to find the optimal settings for SGD that yield the highest validation accuracy after 100 epochs. We denote these optimal values as $\eta^\text{sgd}$ and $m^\text{sgd}$, respectively. We search over small-batch sizes ($\leq 512$) as the goal is to match the performance of adaptive batch sizes with small-batch training. 

For \ourmethod, we use $\eta^\text{sgd}$ as the initial learning rate. Additionally, we search over $\delta$ in $\{0.001, 0.01, 0.1, 1\}$ and $m^{\max}$ in $\{2048, 4096, 8192\}$. For comparison, we run SGD with batch size $m^{\max}$. When increasing the batch size to $m_{k+1}$, we scale the learning rate by $\frac{m_{k+1}}{m_k}$ to maintain the learning rate and batch size ratio $\frac{\eta_{k+1}}{m_{k+1}} = \frac{\eta^\text{sgd}}{m^\text{sgd}}$ as suggested by \citet{goyal_accurate_2017}.

In addition to scaling the learning rate by batch sizes, we apply a learning rate schedule to each method that scales the learning rate by $0.75$ every 20 epochs, consistent with experiments in \citet{devarakonda_adabatch_2018}. We run each method for 10 trials and report the averaged results.

\subsubsection{Results and Discussions}

\begin{figure*}[t!]
    \centering
    \begin{subfigure}[b]{0.29\textwidth}
        \centering
        \includegraphics[width=\textwidth]{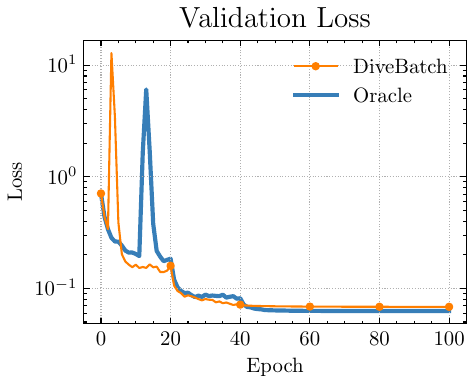}
        \label{fig:oracle_convex_val}
    \end{subfigure}
    \hspace{10pt}
    \begin{subfigure}[b]{0.3\textwidth}
        \centering
        \includegraphics[width=\textwidth]{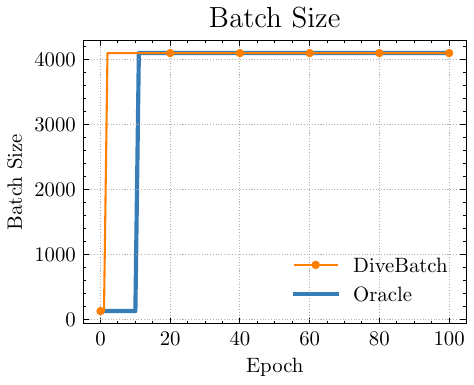}
        \label{fig:convex_train_losses}
    \end{subfigure}
    \hspace{10pt}
    \begin{subfigure}[b]{0.29\textwidth}
        \centering
        \includegraphics[width=\textwidth]{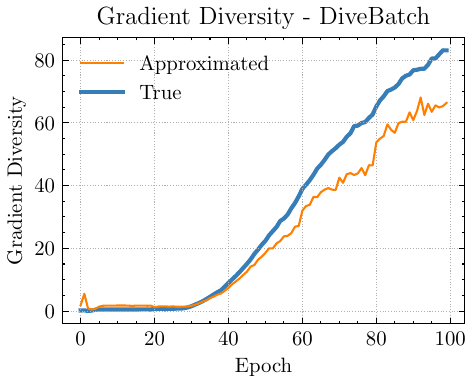}
        \label{fig:convex_grad_div}
    \end{subfigure}
    
    \begin{subfigure}[b]{0.29\textwidth}
        \centering
        \includegraphics[width=\textwidth]{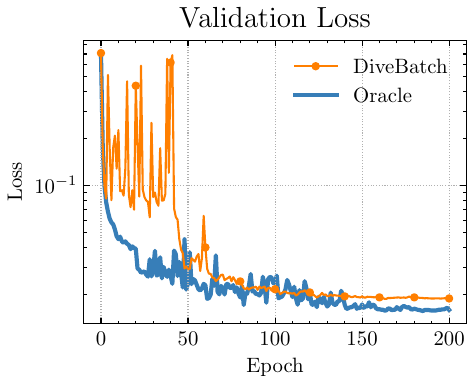}
        \label{fig:oracle_nonconvex_val}
    \end{subfigure}\hspace{12pt}
    \begin{subfigure}[b]{0.3\textwidth}
        \centering
        \includegraphics[width=\textwidth]{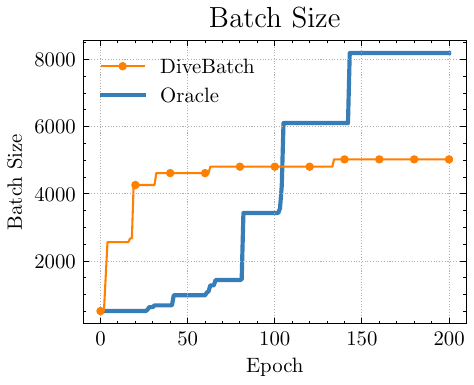}
        \label{fig:nonconvex_train_accs}
    \end{subfigure}\hspace{10pt}
    \begin{subfigure}[b]{0.29\textwidth}
        \centering
        \includegraphics[width=\textwidth]{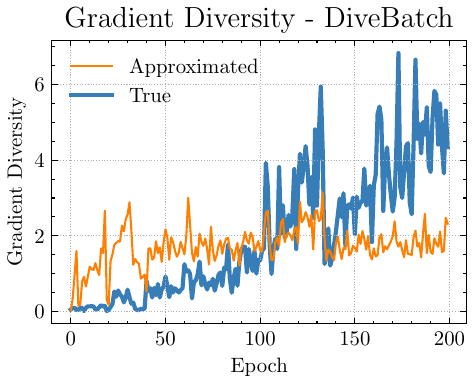}
        \label{fig:nonconvex_grad_div}
    \end{subfigure}
    \caption{Validation losses and batch size progression of \oracle and \ourmethod. \textbf{Convex case} \textit{(Top)}: The batch sizes of \oracle and \ourmethod closely match. \textbf{Nonconvex case} \textit{(Bottom)}: The batch size of \oracle increases to $m^{\max}$ gradually while \ourmethod uses $m^{\max}$ after only a few epochs, and \oracle achieves lower validation accuracies.}
    \label{fig:oracle_combined_results}
\end{figure*}

\paragraph{Convex Case} For the choice of $\delta$ in the batch size update, we performed a grid search over $\{0.001, 0.01, 0.1, 1\}$. Surprisingly, a large $\delta$ value ($\delta > 0.1$) performs better than smaller values. As long as $ n >> m^{\max}$, according to line 11 in \cref{alg:ouralgo}, using a large $\delta$ means the algorithm started training with $m^{\max}$ after few epochs. In other words, we find that starting with a small-batch size for a few epochs, then training on large-batches improves convergence. In \cref{fig:combined_results} we report the results of SGD with $m^\text{sgd} = 128$ and $m^{\max} = 4096$ and observe that the validation loss of \ourmethod drops more rapidly after approximately 10 epochs and stabilizes by epoch 40, outperforming the SGD baselines in speed.  While \ourmethod converges to a higher validation loss, it achieves its final validation accuracy after around 20 epochs, matching the ending accuracy of the SGD baselines. Note that the spike in validation loss (dip in validation accuracy) of \ourmethod results from the increase in learning rate and batch size.

\paragraph{Non-convex Case} We follow the same hyperparameter search procedure as in the convex case. With the optimal choice of $\delta = 0.1$, we observe that the batch size does not always increase to $m^{\max}$ which is 8192. Instead, the average maximum batch size selected by \ourmethod is $5028$. The bottom two figures in \cref{fig:combined_results}
 show that training with a fixed large-batch size (5028) leads to much higher average loss and lower average accuracy. This occurs because, in some of the 10 trials, the losses do not converge. In contrast, our adaptive batch size scheme allows training with a large-batch size of 5028 while achieving a slightly higher validation loss but maintaining comparable validation accuracy to small-batch SGD (512).

\paragraph{The Effect of Gradient Diversity Estimate Quality} To investigate the effect of using gradient estimation $\hat{\Delta}_{\mathcal{S}}(\boldsymbol{\theta}^{t_{k + 1}})$ in place of the true gradient diversity $\Delta_\mathcal{S} \left(\boldsymbol{\theta} \right)$, we design an \oracle algorithm that is almost identical to \cref{alg:ouralgo} but with a key difference: instead of using the approximation $\hat{\Delta}_{\mathcal{S}}\left(\boldsymbol{\theta}^{t_{k + 1}}\right)$, \oracle computes the exact gradient diversity ${\Delta}_{\mathcal{S}}\left(\boldsymbol{\theta}^{t_{k + 1}}\right)$ of the entire dataset after every epoch. \oracle uses the same hyperparameters as \ourmethod.

The top three figures of \cref{fig:oracle_combined_results} show that in the convex case, $\hat{\Delta}_{\mathcal{S}}\left(\boldsymbol{\theta}^{t_{k + 1}}\right)$ is a good approximation for ${\Delta}_{\mathcal{S}}\left(\boldsymbol{\theta}^{t_{k + 1}}\right)$. While the approximation error increases after epoch 40, this does not impact the batch sizes because they have already reached $m^{\max}$. The batch size schedules of \ourmethod closely match those of \oracle, with both methods agreeing to increase the batch size to $m^{\max}$. 
In the nonconvex case, however,  \oracle gradually increases the batch sizes to $m^{\max} = 8192$, while \ourmethod rapidly scales the batch size during the early stages of training before stabilizing at 5028. During the initial training stage (before epoch 50), when \ourmethod uses larger batch sizes, \oracle's validation losses drop faster and are more stable. In the convex case, the similarity in batch size schedules results in comparable final validation losses. In contrast, in the nonconvex case, \oracle’s dynamic adjustment of batch sizes based on true gradient diversity enables it to converge to a lower validation loss, demonstrating the advantage of using exact gradient diversity.

The gradient diversity plots reveal that both the approximated and true gradient diversities increase steadily in the convex case and have larger magnitudes than in the nonconvex case. This difference highlights the complexity of optimizing nonconvex objectives compared to convex ones.

\begin{figure*}[t!]
    \centering
    \begin{subfigure}[b]{0.33\textwidth}
        \centering
        \includegraphics[width=\textwidth]{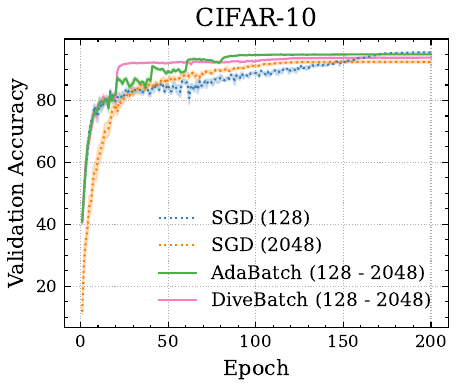}
    \end{subfigure}\hspace{0pt}
    \begin{subfigure}[b]{0.33\textwidth}
        \centering
        \includegraphics[width=\textwidth]{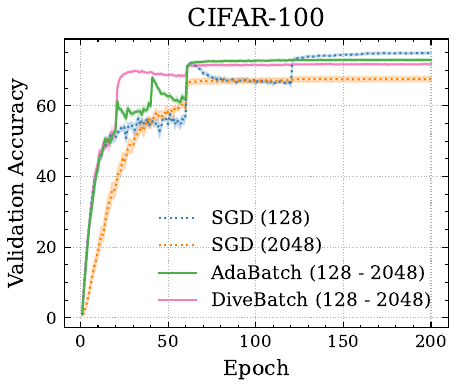}
    \end{subfigure}\hspace{0pt}
    \begin{subfigure}[b]{0.33\textwidth}
    \centering
    \includegraphics[width=\textwidth]{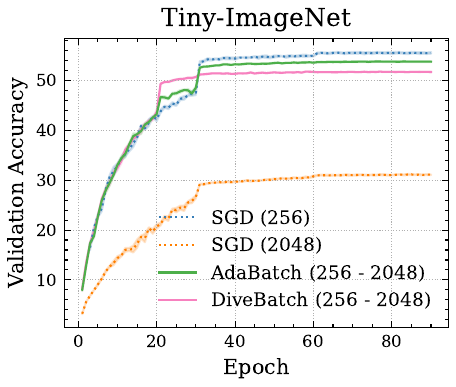}
    \end{subfigure}\hspace{15pt}
    \caption{Validation accuracy on \textsc{CiFar}-10, \textsc{CiFar}-100 and \textsc{Tiny-ImageNet}. In the parentheses are the batch sizes for SGD and initial batch size - end batch size for Adabatch and \ourmethod. The accuracies of \ourmethod on  \textsc{CiFar}-10 and textsc{Tiny-ImageNet} stabilize at around epoch 30, faster than AdaBatch, but result in lower accuracies. The plots show the average validation accuracy over 5 trials.}
    \label{fig:real_combined_acc}
    \centering
    \begin{subfigure}[b]{0.33\textwidth}
        \centering
        \includegraphics[width=\textwidth]{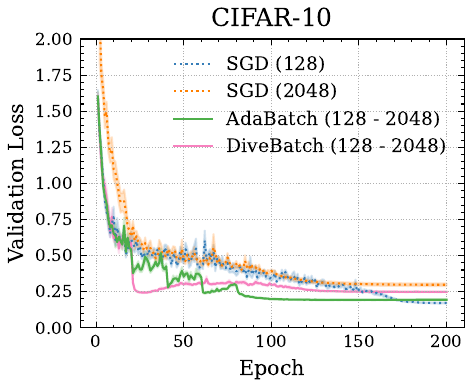}
    \end{subfigure}\hspace{6pt}
    \begin{subfigure}[b]{0.31\textwidth}
        \centering
        \includegraphics[width=\textwidth]{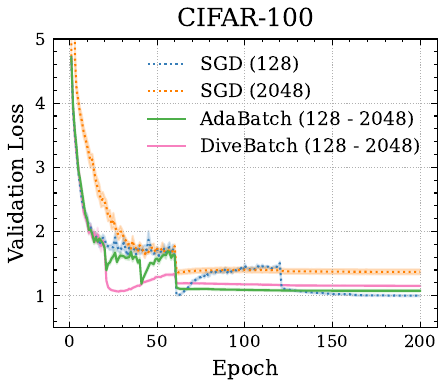}
    \end{subfigure}\hspace{0pt}
    \begin{subfigure}[b]{0.32\textwidth}
    \centering
    \includegraphics[width=\textwidth]{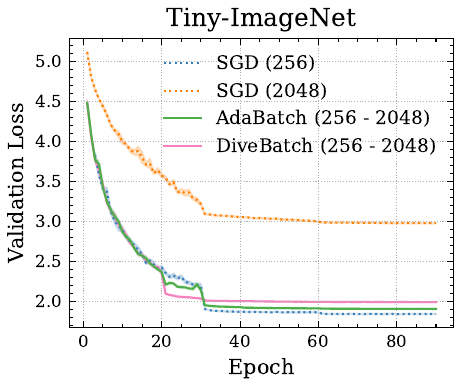}
    \end{subfigure}
    \caption{Validation loss on \textsc{CiFar}-10, \textsc{CiFar}-100 and \textsc{Tiny-ImageNet}. The loss of \ourmethod stabilizes faster than AdaBatch but converges to higher losses. The plot shows the average validation loss over 5 trials.}
    \label{fig:real_combined_loss}
\end{figure*}

\subsection{Real World Experiments}\label{sec:real_exp}
\subsubsection{Setup} We apply standard SGD, AdaBatch~\citep{devarakonda_adabatch_2018}, and \ourmethod to train standard ResNet-20~\citep{he2016deep} networks on \textsc{Cifar}-10, \textsc{Cifar}-100~\citep{krizhevsky2009learning}, and \textsc{Tiny-ImageNet}~\citep{le_tiny_nodate}.

\paragraph{Hyperparameters} We adapt the training code from open-sourced codebases\footnote{\url{https://github.com/weiaicunzai/pytorch-cifar100}\label{ft:100}}\footnote{\url{https://github.com/kuangliu/pytorch-cifar}\label{ft:10}}\footnote{\url{https://github.com/tjmoon0104/Tiny-ImageNet-Classifier}\label{ft:tiny_img}} that achieve state-of-the-art results the respective datasets, employing their hyperparameters. Please refer to the appendix for details.\ We choose 2048 as the maximum batch size in Adabatch and \ourmethod.

For each dataset, we run two versions of AdaBatch and \ourmethod: one with learning rate rescaling proportional to the batch size (e.g., doubling the batch size doubles the learning rate) and one without rescaling. This inclusion is motivated by the use of the rescaling version in AdaBatch. From the training loss curves, we observ that the version without learning rate rescaling shows more stable progression. Therefore, we report these results in the main text and include the learning rate rescaling results in \cref{app:rescaleTrue}.

\paragraph{Datasets}
\textsc{Cifar}-10 and \textsc{Cifar}-100 are two image classification datasets used in \citet{devarakonda_adabatch_2018}. \textsc{Cifar}-10 has 10 classes each with 6000 images. \textsc{Cifar}-10 has 100 classes each with 600 images. Both datasets are split into 50000 train and 10000 test images. Additionally, we include \textsc{Tiny-ImageNet}, which has 200 classes, each with 500 training images, and 50 validation images, to assess \ourmethod on a classification task with a larger number of classes. For the hyperparameter search on \ourmethod, we further hold out $20\%$ of the train images as validation sets.

\subsubsection{Results and Discussions}

\begin{table*}[ht!]
\centering
\begin{adjustbox}{max width=\textwidth}
\begin{tabular}{llccccc}
\toprule
& & \multicolumn{4}{c}{Validation Accuracy at Epochs (\% of Total Training)} & \multicolumn{1}{c}{Time to $\pm$1\% Final Acc.} \\
\cmidrule(lr){3-6} \cmidrule(lr){7-7}
 Dataset &Algorithm   & 25\% & 50\% & 75\%  & 100\% (Final) & Avg. Time (Seconds)\\
\midrule
\multirow{4}{*}{\textsc{Cifar}-10} & SGD (128) & 85.13 ± 0.36 & 88.88 ± 0.56 & 92.64 ± 0.20 & \textbf{95.50 ± 0.02} & 3188.80\\
 & SGD (2048) & 86.98 ± 0.27 & 91.11 ± 0.33 & 92.45 ± 0.24 & 92.47 ± 0.27 & 1556.89\\
 & AdaBatch (128 - 2048) & 89.55 ± 0.26 &\textbf{ 94.68 ± 0.02} & \textbf{94.86 ± 0.03} & 94.88 ± 0.02 & 1361.11  \\
 & \ourmethod (128 - 2048) & \textbf{92.16 ± 0.17} & 92.78 ± 0.11 & 93.81 ± 0.07 & 93.82 ± 0.08 & \textbf{638.15} \\
\midrule
\multirow{4}{*}{\textsc{Cifar}-100} & SGD (128) & 54.82 ± 1.22 & 66.26 ± 0.38 &\textbf{ 74.43 ± 0.17} &\textbf{74.92 ± 0.23} & 2808.46\\
 & SGD (2048) & 57.92 ± 0.80 & 67.01 ± 0.70 & 67.50 ± 0.66 & 67.59 ± 0.70 & \textbf{1097.55}\\
 & AdaBatch (128 - 2048) & 62.68 ± 0.80 & \textbf{72.75 ± 0.23} & 72.97 ± 0.22 & 72.99 ± 0.25 & 1262.71\\
 & \ourmethod (128 - 2048) &\textbf{ 68.70 ± 0.20} & 71.59 ± 0.25 & 71.72 ± 0.19 & 71.75 ± 0.16 & 1191.76 \\
\midrule
\multirow{4}{*}{\textsc{Tiny-ImageNet}} & SGD (256) & 45.34 ± 0.27 & \textbf{54.52 ± 0.23} & \textbf{55.40 ± 0.17} & \textbf{55.49 ± 0.18} & 9702.65\\
 & SGD (2048) & 22.52 ± 0.32 & 29.87 ± 0.08 & 30.97 ± 0.10 & 31.08 ± 0.06 & 11068.93\\
 & AdaBatch (256 - 2048) & 46.57 ± 0.25 & 53.25 ± 0.05 & 53.66 ± 0.12 & 53.72 ± 0.07 & 7146.14\\
 & \ourmethod (256 - 2048) & \textbf{49.66 ± 0.05} & 51.36 ± 0.04 & 51.66 ± 0.04 & 51.66 ± 0.06 & \textbf{6477.70}\\
\bottomrule
\end{tabular}
\end{adjustbox}
\caption{Validation accuracy and standard error for \textsc{CiFar}-10, \textsc{CiFar}-100, and \textsc{Tiny-ImageNet} at different stages of training, alongside the average epochs and time required to reach within ±1\% of the final validation accuracy. \ourmethod achieves the highest accuracy and lowest variance early in training and demonstrates superior efficiency. In addition, \ourmethod reaches within ±1\% of the final validation accuracy much faster than small-batch SGD and AdaBatch.}
\label{tab:epoch_acc_time}
\end{table*}

\paragraph{Faster Convergence with Trade-offs}  \Cref{fig:real_combined_acc} and \cref{fig:real_combined_loss}  demonstrate that \ourmethod converges more quickly than AdaBatch.\ The key advantage of \ourmethod is its rapid convergence, achieving high accuracy after approximately 30 epochs. However, this accelerated convergence comes with a trade-off: once \ourmethod stabilizes, the loss does not continue to decrease with additional epochs. Similarly, while \ourmethod reaches a high accuracy level quickly, further training does not yield significant improvements.

In \cref{tab:epoch_acc_time} we compare the validation accuracy of each algorithm at 25\%, 50\%, 75\%, and 100\% of the total training epochs. \ourmethod achieves the highest accuracy 25\% mark and maintains competitive accuracy at later stages, differing by less than $5\%$ from the highest values. We further analyze the number of epochs and the time (in seconds) each algorithm takes to achieve a validation accuracy within $\pm 1\%$ of its final validation accuracy. In \cref{tab:epoch_acc_time}, we see that \ourmethod takes less time than small-batch SGD and AdaBatch on all three datasets. Notably, for \textsc{CiFar}-10, \ourmethod is 2$\times$ faster than AdaBatch and 5$\times$ faster than small-batch SGD. This rapid convergence makes \ourmethod particularly well-suited for scenarios where training speed is more critical than ultimate model performance, such as the exploratory phase of neural network training. In such cases, practitioners can use \ourmethod to quickly identify a well-performing model, then switch to other training algorithms or model variants to further enhance performance.

\section{Limitations and Future Directions}
In this work, we introduce an adaptive batch size method based on gradient diversity. Although our method demonstrates empirical success, we do not yet fully understand its theoretical impact on convergence. Furthermore, \ourmethod approximates gradient diversity by accumulating gradient information over the mini-batches within an epoch. This approach may introduce estimation errors, as the gradients are computed with respect to parameters that are continuously updated throughout the epoch. Another limitation of \ourmethod is memory consumption. To compute the approximated gradient diversity, we need to compute the individual gradient within a minibatch using the Backpack-for-PyTorch library. In \cref{tab:memory_usage} we compare the peak memory usage of each method averaged over epochs when training on \textsc{CiFar}-10. A promising future direction is to theoretically justify this approximation or develop more accurate methods for estimating gradient diversity without adding computational overhead.

\begin{table}[ht!]
\centering
\begin{adjustbox}{max width=\columnwidth}
\begin{tabular}{lc}
\toprule
Algorithm & Peak Memory Usage (MB) \\
\midrule
SGD (128)& 717.18 \\
SGD (2048) & 9565.23 \\
AdaBatch (128 - 2048) & 6751.37 \\
\ourmethod (128 - 2048) & 13164.19 \\

\bottomrule
\end{tabular}
\end{adjustbox}
\caption{Average peak memory usage when training on \textsc{CiFar}-10.}
\label{tab:memory_usage}
\end{table}

Another direction is to integrate \ourmethod with methods that increase gradient diversity. For instance, \citet{yin_gradient_2018} shows that gradient quantization~\citep{alistarh_qsgd_2017} and stochastic gradient Langevin Dynamic~\citep{welling_bayesian_2011} promote gradient diversity of the entire dataset. We hypothesize that under mild conditions such as Lipschitz continuity, minibatch diversification~\citep{zhang_active_2018,zhang_determinantal_2017,zhdanov_diverse_2019} increases gradient diversity, therefore allowing larger batch sizes.

In this work, we focus on adaptive batch size optimization and do not directly compare \ourmethod with popular optimizers such as Adam~\citep{kingma_adam_2017} and AdamW~\citep{loshchilov2019decoupledweightdecayregularization}.  However, \ourmethod could complement these optimizers by adaptively determining batch sizes to further accelerate convergence.

\section{Conclusion}
\label{sec:con}
We introduce \ourmethod, an adaptive batch size SGD algorithm rooted in the gradient diversity theory~\citep{yin_gradient_2018}. Our synthetic experiments confirm that when approximated gradient diversity aligns with true gradient diversity, the resulting optimization curves closely match, validating the effectiveness of our approximation. Moreover, both synthetic and real world experiments show that \ourmethod enables large-batch training where standard fixed batch SGD struggles. Notably, experiments on real datasets reveal that \ourmethod achieves faster convergence while maintaining accuracy comparable to small-batch SGD in substantially less time.

The time efficiency of \ourmethod highlights its potential to accelerate training processes, particularly during the early stages, where quickly identifying a well-performing model is essential. Future research could integrate \ourmethod as a preliminary step in training pipelines to further enhance efficiency and scalability.
\clearpage
\bibliographystyle{named}
\bibliography{references,ref}
\appendix
\onecolumn
\section{Effect of Gradient Diversity and Batch Size}
\begin{lemma}[\citet{yin_gradient_2018}]\label{app:yin_lemma1}
    Let $\boldsymbol{\theta}^{t+1}$ be a fixed model, and let $\boldsymbol{\theta}^{t}$ denote the model after a mini-batch iteration with batch-size $m = \delta \cdot n \Delta_{\mathcal{S}}\left(\boldsymbol{\theta}^{t}\right)+1$. Then we have:
    \begin{equation*}
        \begin{aligned}
        & \mathbb{E}\left[\left\|\boldsymbol{\theta}^{t+1}-\boldsymbol{\theta}^*\right\|_2^2 \mid \boldsymbol{\theta}^t\right] \leq \left\|\boldsymbol{\theta}^t-\boldsymbol{\theta}^*\right\|_2^2  - m \cdot\left(2 \eta\left\langle\nabla_{\boldsymbol{\theta}} \mathcal{L}_{\mathcal{S}}\left(\boldsymbol{\theta}^t\right), \boldsymbol{\theta}^t-\boldsymbol{\theta}^*\right\rangle-(1+\delta)) \eta^2 M^2\left(\boldsymbol{\theta}^t\right)\right)
        \end{aligned}
    \end{equation*}
    where $M^2\left(\boldsymbol{\theta}^t \right) := \frac{1}{n}\sum_{i=1}^n \|\nabla_{\boldsymbol{\theta}} \ell\left(\boldsymbol{\theta}^t ; \boldsymbol{z}_i\right)\|^2_2$.
\end{lemma}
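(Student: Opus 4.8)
\section*{Proof proposal}

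The plan is to expand the squared distance $\|\boldsymbol{\theta}^{t+1} - \boldsymbol{\theta}^*\|_2^2$ produced by a single mini-batch step and take the conditional expectation over the random mini-batch, treating $\boldsymbol{\theta}^t$ as fixed and $\boldsymbol{\theta}^{t+1}$ as the random iterate. Write $g_i := \nabla_{\boldsymbol{\theta}}\ell(\boldsymbol{\theta}^t; \boldsymbol{z}_i)$ and $\bar{g} := \nabla_{\boldsymbol{\theta}}\mathcal{L}_{\mathcal{S}}(\boldsymbol{\theta}^t) = \frac{1}{n}\sum_{i=1}^n g_i$, so that under the summed-gradient convention in which the statement is phrased the update reads $\boldsymbol{\theta}^{t+1} = \boldsymbol{\theta}^t - \eta\sum_{i\in\mathcal{B}}g_i$, with $\mathcal{B}$ an i.i.d.\ uniform sample of $m$ indices from $[n]$. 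Expanding the square produces the constant $\|\boldsymbol{\theta}^t - \boldsymbol{\theta}^*\|_2^2$, a cross term $-2\eta\langle\sum_{i\in\mathcal{B}}g_i,\, \boldsymbol{\theta}^t - \boldsymbol{\theta}^*\rangle$, and a quadratic term $\eta^2\|\sum_{i\in\mathcal{B}}g_i\|_2^2$. The first two are immediate: linearity together with $\mathbb{E}[\sum_{i\in\mathcal{B}}g_i] = m\bar{g}$ turns the cross term into $-2m\eta\langle\nabla_{\boldsymbol{\theta}}\mathcal{L}_{\mathcal{S}}(\boldsymbol{\theta}^t),\, \boldsymbol{\theta}^t - \boldsymbol{\theta}^*\rangle$, which already supplies both the factor $m$ and the inner product in the claimed bound.

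The heart of the argument is the second moment $\mathbb{E}[\|\sum_{i\in\mathcal{B}}g_i\|_2^2]$. I would write it as a double sum over the $m$ sampled indices and separate diagonal from off-diagonal contributions. The $m$ diagonal terms each contribute $\mathbb{E}\|g_{b_j}\|_2^2 = \frac{1}{n}\sum_{i=1}^n\|g_i\|_2^2 = M^2(\boldsymbol{\theta}^t)$, for a total of $mM^2(\boldsymbol{\theta}^t)$. Each of the $m(m-1)$ off-diagonal terms factors through independence into $\langle\mathbb{E}g_{b_j},\,\mathbb{E}g_{b_k}\rangle = \|\bar{g}\|_2^2 = \frac{1}{n^2}\|\sum_{i=1}^n g_i\|_2^2$, giving an off-diagonal total of $m(m-1)\frac{1}{n^2}\|\sum_{i=1}^n g_i\|_2^2$. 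This is exactly a bias–variance split of the mini-batch gradient.

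The key substitution connects the off-diagonal term to gradient diversity. By \cref{def:gradient_diversity}, $\|\sum_{i=1}^n g_i\|_2^2 = \Delta_{\mathcal{S}}(\boldsymbol{\theta}^t)^{-1}\sum_{i=1}^n\|g_i\|_2^2 = \Delta_{\mathcal{S}}(\boldsymbol{\theta}^t)^{-1}\,n\,M^2(\boldsymbol{\theta}^t)$, which collapses the off-diagonal contribution to $m(m-1)\frac{M^2(\boldsymbol{\theta}^t)}{n\,\Delta_{\mathcal{S}}(\boldsymbol{\theta}^t)}$. The choice $m = \delta\,n\,\Delta_{\mathcal{S}}(\boldsymbol{\theta}^t) + 1$ is precisely what makes $\frac{m-1}{n\,\Delta_{\mathcal{S}}(\boldsymbol{\theta}^t)} = \delta$, so the second moment telescopes to $mM^2(\boldsymbol{\theta}^t) + m\delta M^2(\boldsymbol{\theta}^t) = m(1+\delta)M^2(\boldsymbol{\theta}^t)$. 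Multiplying by $\eta^2$ and recombining with the cross term yields $\|\boldsymbol{\theta}^t - \boldsymbol{\theta}^*\|_2^2 - m\big(2\eta\langle\nabla_{\boldsymbol{\theta}}\mathcal{L}_{\mathcal{S}}(\boldsymbol{\theta}^t),\, \boldsymbol{\theta}^t - \boldsymbol{\theta}^*\rangle - (1+\delta)\eta^2 M^2(\boldsymbol{\theta}^t)\big)$, exactly the statement.

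I expect the main obstacle to be the bookkeeping in the second-moment computation and, more subtly, pinning down the sampling model that governs whether the relation is an equality or the stated inequality. With i.i.d.\ (with-replacement) sampling the decomposition above is exact; the $\leq$ then arises either from rounding $m$ down to an integer at most $\delta\,n\,\Delta_{\mathcal{S}}(\boldsymbol{\theta}^t)+1$, which only shrinks the off-diagonal coefficient, or from sampling without replacement, whose finite-population correction factor $\frac{n-m}{n-1}<1$ likewise only reduces the variance term. Checking that each of these variants pushes the bound in the stated direction, and confirming that the seemingly innocuous ``$+1$'' in the batch-size rule is what cleanly produces the $(1+\delta)$ factor, is the step that warrants the most care.
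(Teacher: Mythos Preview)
Your proposal is correct and follows essentially the same route as the paper: expand the squared distance, take the conditional expectation, split the second moment into diagonal and off-diagonal parts, express the off-diagonal contribution via $n\Delta_{\mathcal{S}}(\boldsymbol{\theta}^t) = M^2(\boldsymbol{\theta}^t)/\|\bar g\|_2^2$, and substitute $m = \delta\,n\,\Delta_{\mathcal{S}}(\boldsymbol{\theta}^t)+1$ to collapse the coefficient to $1+\delta$. In fact your discussion of why the relation is stated as an inequality (integer rounding of $m$, or without-replacement sampling) is more careful than the paper's own derivation, which simply chains equalities throughout.
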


\begin{proof}
    \begin{equation}
    \begin{aligned}
        \mathbb{E}\left[\|\theta^{t+1} - \theta^*\|_2^2\middle| \theta^{t}\right] &= \mathbb{E}\left[\left\|\theta^{t} - \theta^* - \eta \sum_{j \in \mathcal{B}^t} \nabla_{\theta} \ell(\theta^t;z_j)\right\|_2^2 \middle| \theta^{t}\right]\\
         & = \|\theta^{t} - \theta^*\|_2^2 - 2\eta \mathbb{E}\left[\left\langle\theta^{t}-\theta^*, \sum_{j \in \mathcal{B}^t} \nabla_{\theta} \ell(\theta^t;z_j)\right\rangle \middle| \theta^{t}\right]
        + \eta^2\mathbb{E}\left[\left\|\sum_{j \in \mathcal{B}^t} \nabla_{\theta} \ell(\theta^t;z_j)\right\|_2^2 \middle| \theta^{t}\right]\\
        & = \|\theta^{t} - \theta^*\|_2^2 - 2\eta m \langle\theta^{t} -\theta^*, \nabla_{\theta} \mathcal{L}_{\mathcal{S}}(\theta^{t})\rangle + \eta^2 \mathbb{E}\left[ \sum_{j \in \mathcal{B}^t} \|\nabla_{\theta} \ell(\theta^t;z_j)\|^2_2 + \sum_{j \neq k}\langle \nabla_{\theta} \ell(\theta^t;z_j), \nabla_{\theta} \ell(\theta^t;z_j) \rangle \middle| \theta^{t}\right]\\
        & = \|\theta^{t} - \theta^*\|_2^2 - 2\eta m \langle\theta^{t} -\theta^*, \nabla_{\theta} \mathcal{L}_{\mathcal{S}}(\theta^{t})\rangle + \eta^2 \frac{m}{n} \sum_{i = 1}^n \|\nabla_{\theta} \ell(\theta^t;z_i)\|^2_2 \\
        & \quad + \eta^2 \mathbb{E}\left[\sum_{j \neq k}\langle \nabla_{\theta} \ell(\theta^t;z_j), \nabla_{\theta} \ell(\theta^t;z_k) \rangle \middle| \theta^{t}\right]\\
        & = \|\theta^{t} - \theta^*\|_2^2 - 2\eta m \langle\theta^{t} -\theta^*, \nabla_{\theta} \mathcal{L}_{\mathcal{S}}(\theta^{t})\rangle + \eta^2 \frac{m}{n} \sum_{i = 1}^n \|\nabla_{\theta} \ell(\theta^t;z_i)\|^2_2 \\
        & \quad + \eta^2 \mathbb{E}\left[ \sum_{j \neq k}\langle \nabla_{\theta} \ell(\theta^t;z_j), \nabla_{\theta} \ell(\theta^t;z_k) \rangle \middle| \theta^{t}\right]\\
        & = \|\theta^{t} - \theta^*\|_2^2 - 2\eta m \langle\theta^{t} -\theta^*, \nabla_{\theta} \mathcal{L}_{\mathcal{S}}(\theta^{t})\rangle + \eta^2 \frac{m}{n} \sum_{i = 1}^n \|\nabla_{\theta} \ell(\theta^t;z_i)\|^2_2 \\
        & \quad + \eta^2 m (m - 1) \left\|\frac{1}{n} \sum_{i = 1}^n \nabla_{\theta} \ell(\theta^t;z_i)\right\|^2_2\\
        & \text{Define } M^2(\theta) := \frac{1}{n} \sum_{i=1}^n \|\nabla_{\theta} \ell(\theta^t;z_i)\|^2 \text{ and } G(\theta) := \|\nabla_{\theta} \mathcal{L}_{\mathcal{S}}(\theta)\|^2_2 = \left\|\frac{1}{n} \sum_{i = 1}^n \nabla_{\theta} \ell(\theta^t;z_i)\right\|^2_2\\
        &= \|\theta^{t} - \theta^*\|_2^2 - 2\eta m \langle\theta^{t} -\theta^*, \nabla_{\theta} \mathcal{L}_{\mathcal{S}}(\theta^{t})\rangle \nonumber + \eta^2 (m M^2(\theta^{t}) +  m(m - 1)G(\theta^{t})) \\
        & \text{Observe that } n \Delta_{\mathcal{B}^t}\left(\theta^t\right) = \frac{M^2(\theta)}{G(\theta)} \\
        &= \|\theta^{t} - \theta^*\|_2^2 - 2\eta m\langle\theta^{t} -\theta^*, \nabla_{\theta} \mathcal{L}_{\mathcal{S}}(\theta^{t})\rangle \nonumber  + \eta^2  m \left( 1 + \frac{ m - 1}{n \Delta_{\mathcal{B}^t}\left(\theta^t\right)} \right) M^2(\theta^{t}) \\
        &= \|\theta^{t} - \theta^*\|_2^2 - m \left( 2\eta \langle\theta^{t} -\theta^*, \nabla_{\theta} \mathcal{L}_{\mathcal{S}}(\theta^{t})\rangle - \eta^2 (1 + \delta)M^2(\theta^{t})\right).
    \end{aligned}
\end{equation} 
\end{proof}\label{app:lemma1}

\section{Convergence Analysis}\label{app:conv}
Existing convergence analysis of mini-batch SGD guarantees the convergence of \ourmethod. We include the convergence analysis adapted from \citet{bottou_optimization_2018}.

\begin{assumption}[\textbf{Lipschitz-continuous gradients}]\label{assumption:lipschitz_grad} The function $\mathcal{L}$ : $\mathbb{R}^d \rightarrow \mathbb{R}$ is continuously differentiable and the gradient function of $\mathcal{L}$,  is Lipschitz continuous with Lipschitz constant $L>0$, i.e.,
$$
\|\nabla \mathcal{L}(\theta)-\nabla \mathcal{L}(\bar{\theta})\|_2 \leq L\|\theta-\bar{\theta}\|_2 \quad \text { for all }\theta, \bar{\theta} \in \mathbb{R}^d
$$
\end{assumption}
~\\For the ease of notation, we denote $g(\theta, \mathbf{s}) := \frac{1}{|\mathbf{s}|} \sum_{i = 1}^{|\mathbf{s}|} \nabla_{\theta} \ell(\theta; \mathbf{s}_i)$. Note that in mini-batch size is not fixed, so the following analying applies to \ourmethod.
\begin{assumption}[\textbf{First and second-moment limits}]\label{assumption:moment_bound} The objective function $\mathcal{L}$ and the mini-batch SGD algorithm satisfy the following:
\begin{enumerate}
\item The sequence of parameters $\left\{\theta^t\right\}$ is contained in an open set over which $\mathcal{L}$ is bounded below by a scalar $\mathcal{L}_{\mathrm{inf}}$.
\item There exist scalars $\mu_G \geq \mu>0$ such that, for all $t \in \{1,2, \dots\}$,
$$
\begin{aligned}
\nabla \mathcal{L}\left(\theta^t\right)^{\top} \mathbb{E}_{\mathbf{s}_t}\left[g\left(\theta^t, \mathbf{s}_t\right)\right] & \geq \mu\left\|\nabla \mathcal{L}\left(\theta^t\right)\right\|_2^2  \text {  and  } 
\left\|\mathbb{E}_{\mathbf{s}_t}\left[g\left(\theta^t, \mathbf{s}_t\right)\right]\right\|_2 & \leq \mu_G\left\|\nabla \mathcal{L}\left(\theta^t\right)\right\|_2
\end{aligned}
$$ \label{assumption2.2}
    \item  There exist scalars $M \geq 0$ and $M_V \geq 0$ such that, for all $k \in \mathbb{N}$,
$$
\mathbb{V}_{\mathbf{s}_t}\left[g\left(\theta^t, \mathbf{s}_t\right)\right] \leq M+M_V\left\|\nabla \mathcal{L}\left(\theta^t\right)\right\|_2^2,
$$
where $\mathbb{V}_{\mathbf{s}_t}\left[g\left(\theta^t, \mathbf{s}_t\right)\right]$ is the variance of $g\left(\theta^t, \mathbf{s}_t\right)$.
\end{enumerate}
\end{assumption}

~\\Having the necessary assumptions, we next introduce 2 lemmas on which the convergence guarantees are based.

\begin{lemma}\label{lem:lip}
    Under \cref{assumption:lipschitz_grad}, the iterates of (mini-batch) SGD satisfy the following for all $t \in \mathbb{N}$,
    \begin{equation}
\mathbb{E}_{\mathbf{s}_t}\left[\mathcal{L}\left(\theta^{T+1}\right)\right]-\mathcal{L}\left(\theta^t\right) \leq -\alpha_t \nabla \mathcal{L}\left(\theta^t\right)^{\top} \mathbb{E}_{\mathbf{s}_t}\left[g\left(\theta^t, \mathbf{s}_t\right)\right]+\frac{1}{2} \alpha_t^2 L \mathbb{E}_{\mathbf{s}_t}\left[\left\|g\left(\theta^t, \mathbf{s}_t\right)\right\|_2^2\right].
\end{equation}
\end{lemma}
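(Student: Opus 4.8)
The plan is to treat this as the classical one-step descent inequality for an $L$-smooth objective, specialized to the (mini-batch) SGD update $\theta^{t+1} = \theta^t - \alpha_t\, g(\theta^t, \mathbf{s}_t)$. The argument proceeds in three stages: (i) convert \cref{assumption:lipschitz_grad} into a quadratic upper bound on $\mathcal{L}$; (ii) substitute the stochastic descent direction $-\alpha_t g(\theta^t,\mathbf{s}_t)$ into that bound to obtain a pathwise inequality; and (iii) take the conditional expectation over the sampled mini-batch $\mathbf{s}_t$. Notably, none of the moment conditions in \cref{assumption:moment_bound} enter here; only smoothness is used.

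First I would establish the smoothness bound. Writing $\phi(\tau) = \mathcal{L}(\theta + \tau(\bar\theta - \theta))$ and integrating $\phi'$ over $[0,1]$ gives
\begin{equation*}
\mathcal{L}(\bar\theta) - \mathcal{L}(\theta) = \int_0^1 \nabla\mathcal{L}(\theta + \tau(\bar\theta - \theta))^\top (\bar\theta - \theta)\, d\tau.
\end{equation*}
Adding and subtracting $\nabla\mathcal{L}(\theta)^\top(\bar\theta - \theta)$ and bounding the residual integrand with Cauchy--Schwarz together with the Lipschitz condition $\|\nabla\mathcal{L}(\theta + \tau(\bar\theta-\theta)) - \nabla\mathcal{L}(\theta)\|_2 \le L\tau\|\bar\theta-\theta\|_2$ yields the quadratic upper bound
\begin{equation*}
\mathcal{L}(\bar\theta) \le \mathcal{L}(\theta) + \nabla\mathcal{L}(\theta)^\top(\bar\theta - \theta) + \tfrac{L}{2}\|\bar\theta - \theta\|_2^2,
\end{equation*}
valid for all $\theta, \bar\theta \in \mathbb{R}^d$.

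Next I would set $\theta = \theta^t$ and $\bar\theta = \theta^{t+1} = \theta^t - \alpha_t g(\theta^t, \mathbf{s}_t)$, so that $\bar\theta - \theta = -\alpha_t g(\theta^t, \mathbf{s}_t)$, producing the pathwise inequality
\begin{equation*}
\mathcal{L}(\theta^{t+1}) - \mathcal{L}(\theta^t) \le -\alpha_t \nabla\mathcal{L}(\theta^t)^\top g(\theta^t, \mathbf{s}_t) + \tfrac{1}{2}\alpha_t^2 L \|g(\theta^t, \mathbf{s}_t)\|_2^2.
\end{equation*}
Finally, taking $\mathbb{E}_{\mathbf{s}_t}[\cdot]$ conditioned on $\theta^t$ and using that $\theta^t$, $\alpha_t$, and $\nabla\mathcal{L}(\theta^t)$ are deterministic given the history, linearity of expectation moves the expectation onto the two stochastic factors $g(\theta^t,\mathbf{s}_t)$ and $\|g(\theta^t,\mathbf{s}_t)\|_2^2$, which reproduces exactly the claimed bound.

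I do not anticipate a genuine obstacle, since this is the textbook smoothness lemma and is agnostic to whether the mini-batch size is fixed or varies across iterations (hence it applies verbatim to \ourmethod). The only point requiring care is the first stage: the quadratic bound must be derived from the integral form above rather than assumed, and one must retain the $\tau$-dependence of the Lipschitz estimate so that integrating $L\tau$ over $[0,1]$ yields the correct constant $\tfrac{L}{2}$ instead of $L$. Beyond that, the substitution and the expectation step are purely mechanical.
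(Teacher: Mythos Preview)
Your proposal is correct and follows essentially the same approach as the paper: derive the quadratic upper bound from \cref{assumption:lipschitz_grad} via the integral representation (Fundamental Theorem of Calculus) and the Lipschitz estimate, substitute the SGD step $\theta^{t+1}=\theta^t-\alpha_t g(\theta^t,\mathbf{s}_t)$, and take the expectation over $\mathbf{s}_t$. The paper's proof is identical in structure and in the details of obtaining the $\tfrac{L}{2}$ constant.
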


\begin{proof}
We first show a consequence of \cref{assumption:lipschitz_grad}, 

\begin{equation}
\begin{aligned}
F(w) & =F(\bar{w})+\int_0^1 \frac{\partial F(\bar{w}+t(w-\bar{w}))}{\partial t} d t \quad {\text{(Fundamental Theorem of Calculus)}}\\
& =F(\bar{w})+\int_0^1 \nabla \mathcal{L}(\bar{w}+t(w-\bar{w}))^T(w-\bar{w}) d t \\
& =F(\bar{w})+\nabla \mathcal{L}(\bar{w})^T(w-\bar{w})+\int_0^1[\nabla \mathcal{L}(\bar{w}+t(w-\bar{w}))-\nabla \mathcal{L}(\bar{w})]^T(w-\bar{w}) d t \\
& \leq F(\bar{w})+\nabla \mathcal{L}(\bar{w})^T(w-\bar{w})+\int_0^1 L\|t(w-\bar{w})\|_2\|w-\bar{w}\|_2 d t \\
& = F(\bar{w})+\nabla \mathcal{L}(\bar{w})^T(w-\bar{w})+\frac{1}{2} L\|w-\bar{w}\|_2^2.
\end{aligned}\label{eq:1}
\end{equation}
By \cref{eq:1}, the iterates generated by (mini-batch) SGD satisfy
\begin{equation}
\begin{aligned}
\mathcal{L}\left(\theta^{T+1}\right)-\mathcal{L}\left(\theta^t\right) & \leq \nabla \mathcal{L}\left(\theta^t\right)^{\top}\left(w_{T+1}-\theta^t\right)+\frac{1}{2} L\left\|w_{T+1}-\theta^t\right\|_2^2 \\
& \leq-\alpha_t \nabla \mathcal{L}\left(\theta^t\right)^{\top} g\left(\theta^t, \mathbf{s}_t\right)+\frac{1}{2} \alpha_t^2 L\left\|g\left(\theta^t, \mathbf{s}_t\right)\right\|_2^2
\end{aligned}
\end{equation}Taking expectations on all sides with respect to the distribution of $s_k$,
we have

\begin{equation}
\begin{aligned}
\mathbb{E}{\left[\mathcal{L}\left(\theta^{T+1}\right)-\mathcal{L}\left(\theta^t\right)\right]} & \leq \mathbb{E}{\left[\nabla \mathcal{L}\left(\theta^t\right)^{\top}\left(w_{T+1}-\theta^t\right)+\frac{1}{2} L\left\|w_{T+1}-\theta^t\right\|_2^2\right]} \\
& \leq \mathbb{E}{\left[-\alpha_t \nabla \mathcal{L}\left(\theta^t\right)^{\top} g\left(\theta^t, \mathbf{s}_t\right)+\frac{1}{2} \alpha_t^2 L\left\|g\left(\theta^t, \mathbf{s}_t\right)\right\|_2^2\right]}\\
& = -\alpha_t \nabla \mathcal{L}\left(\theta^t\right)^{\top} \mathbb{E}_{\mathbf{s}_t}\left[g\left(\theta^t, \mathbf{s}_t\right)\right]+\frac{1}{2} \alpha_t^2 L \mathbb{E}_{\mathbf{s}_t}\left[\left\|g\left(\theta^t, \mathbf{s}_t\right)\right\|_2^2\right],
\end{aligned}
\end{equation}
as desired.
\end{proof}

~\\We have the following inequalities under additional \cref{assumption:moment_bound}.
\begin{lemma}\label{lem:lip_bd_moment}
Under \cref{assumption:lipschitz_grad} and \cref{assumption:moment_bound}, the iterates of (mini-batch) SGD satisfy the following for all $t \in \mathbb{N}$,

\begin{equation}
\begin{aligned}
\mathbb{E}_{\mathbf{s}_t}\left[\mathcal{L}\left(\theta^{T+1}\right)\right]-\mathcal{L}\left(\theta^t\right) & \leq-\mu \alpha_t\left\|\nabla \mathcal{L}\left(\theta^t\right)\right\|_2^2+\frac{1}{2} \alpha_t^2 L \mathbb{E}_{\mathbf{s}_t}\left[\left\|g\left(\theta^t, \mathbf{s}_t\right)\right\|_2^2\right] \\
& \leq-\left(\mu-\frac{1}{2} \alpha_t L M_G\right) \alpha_t\left\|\nabla \mathcal{L}\left(\theta^t\right)\right\|_2^2+\frac{1}{2} \alpha_t^2 L M,
\end{aligned}
\end{equation}
where $M_G := M_V + \mu^2_G$.
\end{lemma}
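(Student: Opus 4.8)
The plan is to start from the one-step bound already established in \cref{lem:lip},
\[
\mathbb{E}_{\mathbf{s}_t}\!\left[\mathcal{L}(\theta^{t+1})\right] - \mathcal{L}(\theta^t) \leq -\alpha_t \nabla \mathcal{L}(\theta^t)^{\top} \mathbb{E}_{\mathbf{s}_t}\!\left[g(\theta^t,\mathbf{s}_t)\right] + \tfrac{1}{2}\alpha_t^2 L\, \mathbb{E}_{\mathbf{s}_t}\!\left[\|g(\theta^t,\mathbf{s}_t)\|_2^2\right],
\]
and to tighten the two right-hand terms separately using \cref{assumption:moment_bound}. The first inequality of the lemma is immediate: since $\alpha_t > 0$, I multiply the first-moment lower bound $\nabla \mathcal{L}(\theta^t)^{\top} \mathbb{E}_{\mathbf{s}_t}[g(\theta^t,\mathbf{s}_t)] \geq \mu \|\nabla \mathcal{L}(\theta^t)\|_2^2$ guaranteed by \cref{assumption:moment_bound} by $-\alpha_t$, which flips the inequality, and substitute it into the descent bound while leaving the second-moment term unchanged.

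For the second inequality the key step is to bound the second moment $\mathbb{E}_{\mathbf{s}_t}[\|g(\theta^t,\mathbf{s}_t)\|_2^2]$. I would apply the variance--mean decomposition $\mathbb{E}[\|X\|_2^2] = \mathbb{V}[X] + \|\mathbb{E}[X]\|_2^2$ with $X = g(\theta^t,\mathbf{s}_t)$. The variance term is controlled by the variance bound of \cref{assumption:moment_bound}, giving $\mathbb{V}_{\mathbf{s}_t}[g(\theta^t,\mathbf{s}_t)] \leq M + M_V \|\nabla \mathcal{L}(\theta^t)\|_2^2$, and the squared-mean term is controlled by squaring the norm bound $\|\mathbb{E}_{\mathbf{s}_t}[g(\theta^t,\mathbf{s}_t)]\|_2 \leq \mu_G \|\nabla \mathcal{L}(\theta^t)\|_2$ from \cref{assumption:moment_bound}, giving $\|\mathbb{E}_{\mathbf{s}_t}[g(\theta^t,\mathbf{s}_t)]\|_2^2 \leq \mu_G^2 \|\nabla \mathcal{L}(\theta^t)\|_2^2$. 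Summing the two yields
\[
\mathbb{E}_{\mathbf{s}_t}\!\left[\|g(\theta^t,\mathbf{s}_t)\|_2^2\right] \leq M + (M_V + \mu_G^2)\|\nabla \mathcal{L}(\theta^t)\|_2^2 = M + M_G \|\nabla \mathcal{L}(\theta^t)\|_2^2,
\]
which is precisely where the constant $M_G := M_V + \mu_G^2$ is introduced.

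Finally, I would substitute this second-moment bound into the first inequality and collect terms: the coefficient $-\mu\alpha_t$ from the first-moment contribution and the coefficient $+\tfrac{1}{2}\alpha_t^2 L M_G$ from the second-moment contribution both multiply $\|\nabla \mathcal{L}(\theta^t)\|_2^2$, so they combine into the single factor $-(\mu - \tfrac{1}{2}\alpha_t L M_G)\alpha_t$, while the remaining $\tfrac{1}{2}\alpha_t^2 L M$ stays as the additive constant, producing the stated bound. The derivation is essentially mechanical bookkeeping; the only point needing care is the variance--mean decomposition, where one must read $\mathbb{V}[\cdot]$ as the total variance (the trace of the covariance) so that $\mathbb{E}[\|X\|_2^2]$ splits cleanly into variance plus squared mean with no leftover cross terms. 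I do not anticipate any substantive obstacle beyond this.
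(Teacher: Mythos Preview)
Your proposal is correct and follows essentially the same route as the paper: apply \cref{lem:lip}, use the first-moment lower bound from \cref{assumption:moment_bound} for the first inequality, then bound the second moment by $M + M_G\|\nabla \mathcal{L}(\theta^t)\|_2^2$ and collect terms for the second inequality. Your variance--mean decomposition is exactly the justification behind the paper's stated second-moment bound (which it simply records as a ``consequence'' of \cref{assumption:moment_bound} without spelling out the split), so if anything you are slightly more explicit than the original.
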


\begin{proof}
We start by introducing a consequence of \cref{assumption:moment_bound}, that is, the second moment of $g(\theta^t, \mathbf{s}_t)$ satisfies 
\begin{equation}\label{eq4.9}
\mathbb{E}_{\mathbf{s}_t}\left[\left\|g\left(\theta^t, \mathbf{s}_t\right)\right\|_2^2\right] \leq M+M_G\left\|\nabla \mathcal{L}\left(\theta^t\right)\right\|_2^2
\end{equation}
By \cref{lem:lip} and \cref{assumption:moment_bound}(2), 
    \begin{equation}
\begin{aligned}
\mathbb{E}_{\mathbf{s}_t}\left[\mathcal{L}\left(\theta^{T+1}\right)\right]-\mathcal{L}\left(\theta^t\right) & \leq-\alpha_t \nabla \mathcal{L}\left(\theta^t\right)^{\top} \mathbb{E}_{\mathbf{s}_t}\left[g\left(\theta^t, \mathbf{s}_t\right)\right]+\frac{1}{2} \alpha_t^2 L \mathbb{E}_{\mathbf{s}_t}\left[\left\|g\left(\theta^t, \mathbf{s}_t\right)\right\|_2^2\right] \\
& \leq-\mu \alpha_t\left\|\nabla \mathcal{L}\left(\theta^t\right)\right\|_2^2+\frac{1}{2} \alpha_t^2 L \mathbb{E}_{\mathbf{s}_t}\left[\left\|g\left(\theta^t, \mathbf{s}_t\right)\right\|_2^2\right] \\
& \leq -\mu \alpha_t\left\|\nabla \mathcal{L}\left(\theta^t\right)\right\|_2^2 + \frac{1}{2} \alpha_t^2 L \left( M+M_G\left\|\nabla \mathcal{L}\left(\theta^t\right)\right\|_2^2\right)\\
& = -\left(\mu-\frac{1}{2} \alpha_t L M_G\right) \alpha_t\left\|\nabla \mathcal{L}\left(\theta^t\right)\right\|_2^2+\frac{1}{2} \alpha_t^2 L M, 
\end{aligned}
\end{equation}
where the third inequality comes from \cref{eq4.9}.
\end{proof}

~\\Now we are ready to show the convergence analysis of (mini-batch) SGD for general loss functions. Here we focus on general loss functions as they are ubiquitous in modern machine learning. Convex and strongly convex functions have also been widely studied, we refer interested readers to \citet{bottou_optimization_2018, gower_sgd_2019,robbins_stochastic_1951, bottou_large-scale_2010, garrigos_handbook_2024}.

\begin{theorem}[\textbf{Fixed Stepsize}]\label{thm:nonconvex_fixstep}
    Under \cref{assumption:lipschitz_grad} and \cref{assumption:moment_bound}, and suppose the (mini-batch) SGD runs with a fixed stepsize $\alpha = \alpha_t \; \forall t$, such that
    \begin{equation}\label{eq:fixstepsize_bound}
        0 \leq \alpha \leq \frac{\mu}{L M_G}, 
    \end{equation}
we have the (mini-batch) SGD iterates satisfy the following inequalities for all $T \in \mathbb{N}$:

\begin{subequations}\label{eq:fixstep}
\begin{align}
& \mathbb{E}\left[\sum_{t=1}^T\left\|\nabla \mathcal{L}\left(\theta^t\right)\right\|_2^2\right] \leq \frac{T \alpha L M}{\mu}+\frac{2\left(\mathcal{L}\left(\theta^1\right)-\mathcal{L}_{\text {inf }}\right)}{\mu \alpha} \label{eq:fixstep_a}\\
\implies & \mathbb{E}\left[\frac{1}{T} \sum_{t=1}^T\left\|\nabla \mathcal{L}\left(\theta^t\right)\right\|_2^2\right] \leq \frac{\alpha L M}{\mu}+\frac{2\left(\mathcal{L}\left(\theta^1\right)-F_{\mathrm{inf}}\right)}{T \mu \alpha} \xrightarrow{K \rightarrow \infty} \frac{\alpha L M}{\mu} \label{eq:fixstep_b}.
\end{align}
\end{subequations}
\end{theorem}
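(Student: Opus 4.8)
The plan is to build directly on \cref{lem:lip_bd_moment}, which already controls the expected one-step change of $\mathcal{L}$ in terms of $\|\nabla\mathcal{L}(\theta^t)\|_2^2$ plus a constant noise term. First I would specialize that lemma to the constant stepsize $\alpha_t = \alpha$ and exploit the constraint \eqref{eq:fixstepsize_bound}. Since $\alpha \leq \mu/(L M_G)$ gives $\tfrac12 \alpha L M_G \leq \tfrac12 \mu$, the gradient-norm coefficient obeys $\mu - \tfrac12 \alpha L M_G \geq \tfrac12 \mu > 0$. Substituting this lower bound into \cref{lem:lip_bd_moment} (which only enlarges the magnitude of the negative term, preserving the upper bound) yields the clean per-step inequality
\[
\mathbb{E}_{\mathbf{s}_t}\left[\mathcal{L}(\theta^{t+1})\right] - \mathcal{L}(\theta^t) \leq -\tfrac12 \mu \alpha \left\|\nabla\mathcal{L}(\theta^t)\right\|_2^2 + \tfrac12 \alpha^2 L M .
\]

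Next I would pass from the conditional expectation $\mathbb{E}_{\mathbf{s}_t}[\cdot]$ to the total expectation $\mathbb{E}[\cdot]$ via the tower property, taking expectation over all mini-batch randomness through iteration $t$, and then sum the resulting inequality over $t = 1, \dots, T$. The left-hand side telescopes to $\mathbb{E}[\mathcal{L}(\theta^{T+1})] - \mathcal{L}(\theta^1)$. Invoking the lower bound $\mathcal{L}(\theta^{T+1}) \geq \mathcal{L}_{\mathrm{inf}}$ from \cref{assumption:moment_bound}(1), rearranging to isolate $\sum_{t=1}^T \mathbb{E}[\|\nabla\mathcal{L}(\theta^t)\|_2^2]$, and multiplying through by $2/(\mu\alpha)$ produces exactly \eqref{eq:fixstep_a}. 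Dividing by $T$ gives \eqref{eq:fixstep_b}, and letting $T \to \infty$ sends the $O(1/T)$ transient term to zero, leaving the asymptotic floor $\alpha L M/\mu$.

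The argument is essentially mechanical once \cref{lem:lip_bd_moment} is available; the only place needing mild care is the transition from conditional to total expectation before summing, where one must check that iterated expectation applies, i.e.\ that $\theta^t$ is measurable with respect to $\mathbf{s}_1, \dots, \mathbf{s}_{t-1}$ so that conditioning on $\theta^t$ and then taking the outer expectation recovers $\mathbb{E}[\|\nabla\mathcal{L}(\theta^t)\|_2^2]$. I do not anticipate a genuine obstacle: the telescoping sum together with the $\mathcal{L}_{\mathrm{inf}}$ floor is precisely what renders the accumulated bounded-noise contribution tolerable, and the stepsize restriction \eqref{eq:fixstepsize_bound} is calibrated so the gradient-norm coefficient stays positive (indeed at least $\tfrac12\mu$), which is exactly what prevents the inequality from degenerating.
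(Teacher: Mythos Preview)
Your proposal is correct and follows essentially the same route as the paper: invoke \cref{lem:lip_bd_moment}, use the stepsize constraint \eqref{eq:fixstepsize_bound} to lower-bound the gradient-norm coefficient by $\tfrac12\mu$, pass to total expectation, telescope over $t=1,\dots,T$, apply the $\mathcal{L}_{\mathrm{inf}}$ floor, and rearrange. The only cosmetic difference is that you simplify the coefficient \emph{before} taking total expectation, whereas the paper does so while summing; both orderings are equivalent.
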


\begin{proof}
From \cref{lem:lip_bd_moment}, we have 
\begin{equation}
\begin{aligned}
\mathbb{E}_{\mathbf{s}_t}\left[\mathcal{L}\left(\theta^{T+1}\right)\right]-\mathcal{L}\left(\theta^t\right) & \leq-\left(\mu-\frac{1}{2} \alpha L M_G\right) \alpha\left\|\nabla \mathcal{L}\left(\theta^t\right)\right\|_2^2+\frac{1}{2} \alpha^2 L M,
\end{aligned}
\end{equation}
Taking expectations with respect to the randomness of the parameter $\theta^t$, 

\begin{equation}
\begin{aligned}
\mathbb{E}_{\theta^t} \left[\mathbb{E}_{\mathbf{s}_t}\left[\mathcal{L}\left(\theta^{T+1}\right)\right]-\mathcal{L}\left(\theta^t\right)\mid \theta^t \right]
& = 
\mathbb{E}\left[\mathcal{L}\left(\theta^{T+1}\right)\right]-\mathbb{E}\left[\mathcal{L}\left(\theta^t\right)\right] \\
& \leq-\left(\mu-\frac{1}{2} \alpha L M_G\right) \alpha \mathbb{E}\left[\left\| \nabla \mathcal{L}\left(\theta^t\right)\right\|_2^2 \right]+\frac{1}{2} \alpha^2 L M,
\end{aligned}
\end{equation}
Summing both size over $t = 1, ..., T$, we have a telescoping sum, together with \cref{eq:fixstepsize_bound}, that simplifies to:

\begin{equation}
\begin{aligned}
\mathbb{E}\left[\mathcal{L}\left(\theta^{T+1}\right)\right]-\mathbb{E}\left[\mathcal{L}\left(\theta^1\right)\right] \leq-\frac{1}{2} \mu \alpha \sum_{t=1}^T\mathbb{E}\left[\left\|\nabla \mathcal{L}\left(\theta^t\right)\right\|_2^2\right]+\frac{1}{2} T \alpha^2 L M
\end{aligned}
\end{equation}

~\\Now by \cref{assumption:moment_bound}(3),

\begin{equation}
\begin{aligned}
\mathcal{L}_{\text {inf }}-\mathcal{L}\left(\theta^1\right) &\leq \mathbb{E}\left[\mathcal{L}\left(\theta^{T+1}\right)\right]-\mathbb{E}\left[\mathcal{L}\left(\theta^1\right)\right] \leq-\frac{1}{2} \mu \alpha \sum_{t=1}^T\mathbb{E}\left[\left\|\nabla \mathcal{L}\left(\theta^t\right)\right\|_2^2\right]+\frac{1}{2} T \alpha^2 L M
 \\
& \iff 
\mathbb{E}\left[\sum_{t=1}^T\left\|\nabla \mathcal{L}\left(\theta^t\right)\right\|_2^2\right] \leq \frac{T \alpha L M}{\mu}+\frac{2\left(\mathbb{E}\left[\mathcal{L}\left(\theta^1\right)\right]-\mathcal{L}_{\text {inf }}\right)}{\mu \alpha}
\end{aligned}
\end{equation}
~\\ Divide both sides by $T$,
\begin{equation}
\mathbb{E}\left[\frac{1}{T} \sum_{t=1}^T\left\|\nabla \mathcal{L}\left(\theta^t\right)\right\|_2^2\right] \leq \frac{\alpha L M}{\mu}+\frac{2\left(\mathbb{E}\left[\mathcal{L}\left(\theta^1\right)\right]-F_{\mathrm{inf}}\right)}{T \mu \alpha}
\end{equation}
~\\where the second term vanishes as $T \rightarrow \infty$.
\end{proof}
~\\Next we consider decreasing stepsizes $\alpha_t$ that satisfy the Robbin's and Monro condition~\citep{robbins_stochastic_1951}, i.e,

\begin{equation}\label{eq:robbin_monro}
\sum_{k=1}^{\infty} \alpha_t=\infty \text { and } \sum_{k=1}^{\infty} \alpha_t^2<\infty
\end{equation}

\begin{theorem}[\textbf{Diminishing Stepsizes}] Under \cref{assumption:lipschitz_grad} and \cref{assumption:moment_bound}, suppose that the (mini-batch) SGD  uses a stepsize sequence satisfying \cref{eq:fixstepsize_bound} and \cref{eq:robbin_monro}, then, with $A_T:=\sum_{t=1}^T\alpha_t$,
$$
\lim _{K \rightarrow \infty} \mathbb{E}\left[\sum_{t=1}^T\alpha_t\left\|\nabla \mathcal{L}\left(\theta^t\right)\right\|_2^2\right]<\infty
$$
$$
\implies \mathbb{E}\left[\frac{1}{A_T} \sum_{t=1}^T\alpha_t\left\|\nabla \mathcal{L}\left(\theta^t\right)\right\|_2^2\right] \xrightarrow{K \rightarrow \infty} 0 \text {. }
$$
\end{theorem}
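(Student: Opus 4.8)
The plan is to reuse the one-step descent inequality of \cref{lem:lip_bd_moment} and convert it into a summable bound by telescoping. First I would invoke \cref{lem:lip_bd_moment}, which under \cref{assumption:lipschitz_grad} and \cref{assumption:moment_bound} gives, for each $t$,
\[
\mathbb{E}_{\mathbf{s}_t}[\mathcal{L}(\theta^{t+1})] - \mathcal{L}(\theta^t) \leq -\left(\mu - \tfrac{1}{2}\alpha_t L M_G\right)\alpha_t \|\nabla \mathcal{L}(\theta^t)\|_2^2 + \tfrac{1}{2}\alpha_t^2 L M.
\]
The stepsize condition \cref{eq:fixstepsize_bound}, now imposed on every $\alpha_t$, ensures $\tfrac{1}{2}\alpha_t L M_G \leq \tfrac{\mu}{2}$, so the bracketed coefficient is at least $\tfrac{\mu}{2}$. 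This replaces the unwieldy coefficient by the clean lower bound $\tfrac{\mu}{2}\alpha_t$, yielding
\[
\mathbb{E}_{\mathbf{s}_t}[\mathcal{L}(\theta^{t+1})] - \mathcal{L}(\theta^t) \leq -\tfrac{\mu}{2}\alpha_t \|\nabla \mathcal{L}(\theta^t)\|_2^2 + \tfrac{1}{2}\alpha_t^2 L M.
\]

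Next I would take total expectation over the randomness in $\theta^t$ and sum over $t = 1, \dots, T$. The left-hand side telescopes to $\mathbb{E}[\mathcal{L}(\theta^{T+1})] - \mathcal{L}(\theta^1)$, and bounding $\mathcal{L}(\theta^{T+1}) \geq \mathcal{L}_{\mathrm{inf}}$ via \cref{assumption:moment_bound}(1) gives
\[
\tfrac{\mu}{2}\sum_{t=1}^T \alpha_t \mathbb{E}\!\left[\|\nabla \mathcal{L}(\theta^t)\|_2^2\right] \leq \mathcal{L}(\theta^1) - \mathcal{L}_{\mathrm{inf}} + \tfrac{1}{2} L M \sum_{t=1}^T \alpha_t^2.
\]
Here the second Robbins--Monro condition $\sum_t \alpha_t^2 < \infty$ makes the right-hand side bounded uniformly in $T$; letting $T \to \infty$ establishes the first claim, $\sum_{t=1}^\infty \alpha_t \mathbb{E}[\|\nabla \mathcal{L}(\theta^t)\|_2^2] < \infty$.

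For the second claim I would divide the displayed inequality by $A_T = \sum_{t=1}^T \alpha_t$. The resulting upper bound is $\tfrac{2}{\mu A_T}\bigl(\mathcal{L}(\theta^1) - \mathcal{L}_{\mathrm{inf}} + \tfrac{1}{2} L M \sum_{t=1}^T \alpha_t^2\bigr)$, whose numerator converges to a finite limit while the first Robbins--Monro condition $\sum_t \alpha_t = \infty$ forces $A_T \to \infty$; hence the $A_T$-weighted average of squared gradient norms vanishes.

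The argument is mostly routine bookkeeping; the step requiring the most care is the passage to total expectation and telescoping, where one must treat $\mathbb{E}_{\mathbf{s}_t}[\cdot]$ as a conditional expectation given $\theta^t$ and apply the tower property so that the iterated expectations collapse into a clean telescoping sum of \emph{unconditional} expectations. The only genuine leverage of the hypotheses is the splitting of the two Robbins--Monro conditions into distinct roles: $\sum_t \alpha_t^2 < \infty$ controls the noise term to yield finiteness, while $\sum_t \alpha_t = \infty$ drives the normalizer $A_T$ to infinity for the averaging statement.
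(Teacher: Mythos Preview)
Your proposal is correct and follows essentially the same route as the paper's proof: invoke \cref{lem:lip_bd_moment}, use the stepsize bound \cref{eq:fixstepsize_bound} to reduce the coefficient to $\tfrac{\mu}{2}$, telescope after taking total expectations, bound below by $\mathcal{L}_{\mathrm{inf}}$, and then split the two Robbins--Monro conditions between the finiteness and the averaging claims. Your write-up is in fact slightly more careful than the paper's (you correctly attribute the lower bound to \cref{assumption:moment_bound}(1) and make the tower-property step explicit).
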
\label{thm:dimiinsh_step}
\begin{proof}
    The condition in \cref{eq:robbin_monro} guarantees that $\alpha_t \rightarrow 0$. Taking the expectations on both sides of \cref{lem:lip_bd_moment},
\begin{equation}\label{eq:diminish_step}
\begin{aligned}
\mathbb{E}\left[\mathcal{L}\left(\theta^{T+1}\right)\right]-\mathbb{E}\left[\mathcal{L}\left(\theta^t\right)\right] 
& \leq-\left(\mu-\frac{1}{2} \alpha_t L M_G\right) \alpha_t \mathbb{E}\left[\left\| \nabla \mathcal{L}\left(\theta^t\right)\right\|_2^2 \right]+\frac{1}{2} \alpha_t^2 L M\\
& \leq-\frac{1}{2} \mu \alpha_t \mathbb{E}\left[\left\|\nabla \mathcal{L}\left(\theta^t\right)\right\|_2^2\right]+\frac{1}{2} \alpha_t^2 L M
\end{aligned}
\end{equation}
~\\Summing both sides for $t \in \{1, \dots, T\}$ yields the telescoping sum:
\begin{equation}
\begin{aligned}
\mathbb{E}\left[\mathcal{L}\left(\theta^{T+1}\right)\right]-\mathbb{E}\left[\mathcal{L}\left(\theta^1\right)\right] \leq-\frac{1}{2} \mu \sum_{t=1}^T\alpha_t  \mathbb{E}\left[\left\|\nabla \mathcal{L}\left(\theta^t\right)\right\|_2^2\right]+\frac{1}{2} L M \sum_{t=1}^T\alpha_t^2
\end{aligned}
\end{equation}

~\\Again, by \cref{assumption:moment_bound}(3),

\begin{equation}
\begin{aligned}
\mathcal{L}_{\text {inf }}-\mathcal{L}\left(\theta^1\right) &\leq \mathbb{E}\left[\mathcal{L}\left(\theta^{T+1}\right)\right]-\mathcal{L}\left(\theta^1\right) \leq-\frac{1}{2} \mu \sum_{t=1}^T\alpha_t  \mathbb{E}\left[\left\|\nabla \mathcal{L}\left(\theta^t\right)\right\|_2^2\right]+\frac{1}{2} L M \sum_{t=1}^T\alpha_t^2
 \\
& \iff 
\sum_{t=1}^T\alpha_t \mathbb{E}\left[\left\|\nabla \mathcal{L}\left(\theta^t\right)\right\|_2^2\right] \leq \frac{L M}{\mu} \sum_{t=1}^T\alpha^2_t +\frac{2\left(\mathbb{E}\left[\mathcal{L}\left(\theta^1\right)\right]-\mathcal{L}_{\text {inf }}\right)}{\mu}
\end{aligned}
\end{equation}
~\\The second term on the right-hand side converges to a finite value by \cref{eq:robbin_monro}, proving the first part of \cref{eq:diminish_step}.

~\\For the second part of \cref{eq:diminish_step}, notice that $A_T \rightarrow \infty$ as $T \rightarrow \infty$, so  
\begin{equation}
\mathbb{E}\left[\frac{1}{A_T} \sum_{t=1}^T\alpha_t\left\|\nabla \mathcal{L}\left(\theta^t\right)\right\|_2^2\right] \xrightarrow{T \rightarrow \infty} 0 .
\end{equation}
\end{proof}

\section{Details on Synthetic Experiments}
In \cref{tab:syn_hparams}, we list the hyperparameters used for the synthetic experiments. Notably, we maintain the ratio of the learning rate to batch size as $\frac{\eta^{sgd}}{m^{sgd}}$, are the optimal values for small-batch SGD, determined through grid search.

\begin{table}[h!]
\centering
\begin{tabular}{llcccc}
\toprule
&\textbf{Hyperparameter} & \textbf{SGD (128)} & \textbf{SGD (4096)} & \textbf{\ourmethod}& \textbf{\oracle} \\
\midrule
\multirow{6}{*}{\textbf{Convex}} & Learning rate & 16 & 512 & 16 & 16\\
 & Initial batch size & 128 & 4096 & 128& 128 \\
 & $\frac{\text{Batch size}}{\text{Learning rate}}$ & 8 & 8 & 8 & 8\\
 & Gradient diversity scale $\delta$ & -- & -- & 1 & 1 \\
 & Maximum batch size & 256 & 4096 & 4096 & 4096\\
 & Learning rate decay & 0.75 & 0.75 & 0.75  & 0.75\\
\midrule
 &\textbf{Hyperparameter} & \textbf{SGD (512)} & \textbf{SGD (5028)} & \textbf{\ourmethod} & \textbf{\oracle}\\
\midrule
\multirow{6}{*}{\textbf{Nonconvex}} & Learning rate & 1 & 9.83 & 1 & 1 \\
 & Initial batch size & 512 & 5028 & 512 & 512 \\
 & $\frac{\text{Batch size}}{\text{Learning rate}}$ & 512 & 512 & 512 & 512\\
 & Gradient diversity scale $\delta$ & -- & -- & 0.1 & 0.1\\
 & Maximum batch size & 256 & 5028 & 5028 & 8192 \\
 & Learning rate decay & 0.75 & 0.75 & 0.75 & 0.75 \\
\bottomrule
\end{tabular}
\caption{Hyperparameters of SGD and \ourmethod in synthetic experiments.}
\label{tab:syn_hparams}
\end{table}

\section{Details on Real-World Experiments}\label{appedix:exp_detail}
\paragraph{Hyperparameters}
The hyperparameters for real-world dataset experiments are summarized in \cref{tab:resnet_hparams}. We referred interested readers to \cref{ft:100}, \cref{ft:10}, and \cref{ft:tiny_img}.

\begin{table*}[ht!]
\centering
\begin{adjustbox}{max width=\textwidth}
\begin{tabular}{llcccc}
\toprule
\textbf{Dataset} & \textbf{Hyperparameter} &\textbf{SGD (small-batch)} & \textbf{SGD (large batch)} & \textbf{AdaBatch} & \textbf{\ourmethod} \\
\midrule
\multirow{7}{*}{\textsc{CiFar}-10} & Initial learning rate & 0.1 & 0.1 & 0.1 & 0.1 \\
 & Initial batch size & 128 & 2048 & 128 & 128 \\
 & Maximum batch size & 128 & 2048 & 2048 & 2048 \\
 & Resize factor & -- & -- & $\times$2 & -- \\
 & Resize frequency & -- & -- & 20 & 20 \\
 & Gradient Diversity Scale $\delta$ & -- & -- & -- & 0.1 \\
\midrule
\multirow{7}{*}{\textsc{CiFar}-100} & Initial learning rate & 0.1 & 0.1 & 0.1 & 0.1 \\
 & Initial batch size & 128 & 2048 & 128 & 128 \\
 & Maximum batch size & 128 & 2048 & 2048 & 2048 \\
 & Resize factor & -- & -- & $\times$2 & -- \\
 & Resize frequency & -- & -- & 20 & 20 \\
 & Gradient Diversity Scale $\delta$ & -- & -- & -- & 0.01 \\
\midrule
\multirow{7}{*}{\textsc{Tiny-ImageNet}} & Initial learning rate & 0.01 & 0.01 & 0.01 & 0.01 \\
 & Initial batch size & 256 & 2048 & 256 & 256 \\
 & Maximum batch size & 256 & 2048 & 2048 & 2048 \\
 & Resize factor & -- & -- & $\times$2 & -- \\
 & Resize frequency & -- & -- & 20 & 20 \\
 & Gradient Diversity Scale $\delta$ & -- & -- & -- & 0.01 \\

\bottomrule
\end{tabular}
\end{adjustbox}
\caption{Hyperparameters for ResNet models across different training methods on \textsc{CiFar}-10, \textsc{CiFar}-100, and \textsc{Tiny-ImageNet}.}
\label{tab:resnet_hparams}
\end{table*}

\section{Additional Experiments--Rescaling Learning Rate}\label{app:rescaleTrue}
In this version of the real-world experiment, we rescale the learning proportional to batch size. For example, when running SGD (2048) on \textsc{CiFar}-10, we start with an initial learning rate (2048/128) * 0.1. Similarly, when AdaBatch or \ourmethod rescale the batch size by $k$ times, the learning rate is also multiplied by $k$. Below we report the results analogous to those in \cref{sec:real_exp}. From \cref{fig:real_combined_acc_rescale} and \cref{fig:real_combined_loss_rescale}, we observe that rescaling the learning rate leads to unstable training trajectory in both \textsc{CiFar}-10 and \textsc{CiFar}-100.

\begin{figure*}[t!]
    \centering
    \begin{subfigure}[b]{0.33\textwidth}
        \centering
        \includegraphics[width=\textwidth]{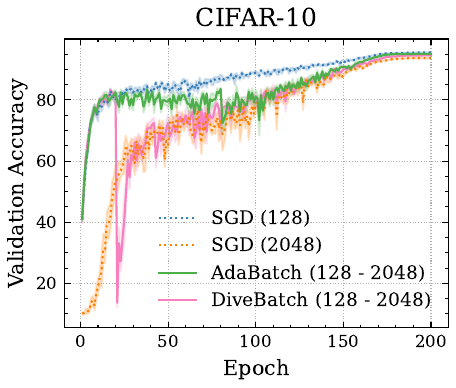}
    \end{subfigure}\hspace{0pt}
    \begin{subfigure}[b]{0.33\textwidth}
        \centering
        \includegraphics[width=\textwidth]{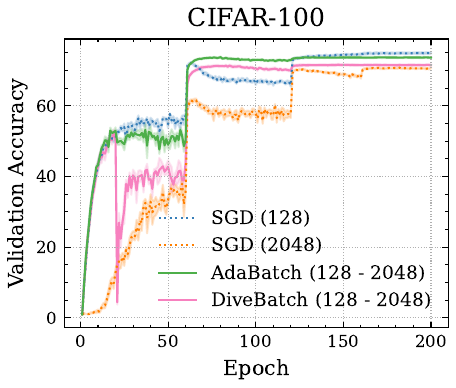}
    \end{subfigure}\hspace{0pt}
    \begin{subfigure}[b]{0.33\textwidth}
    \centering
    \includegraphics[width=\textwidth]{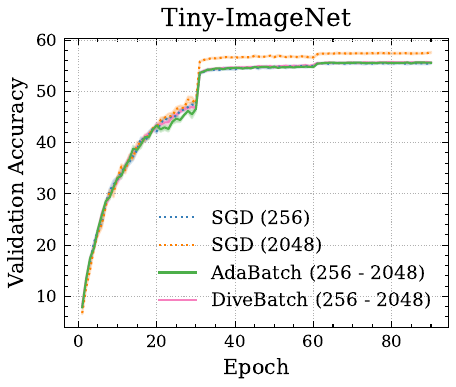}
    \end{subfigure}\hspace{15pt}
    \caption{Validation accuracy on \textsc{CiFar}-10, \textsc{CiFar}-100 and \textsc{Tiny-ImageNet}. The plots show the average validation accuracy over 5 trials.}
    \label{fig:real_combined_acc_rescale}
    \centering
    \begin{subfigure}[b]{0.33\textwidth}
        \centering
        \includegraphics[width=\textwidth]{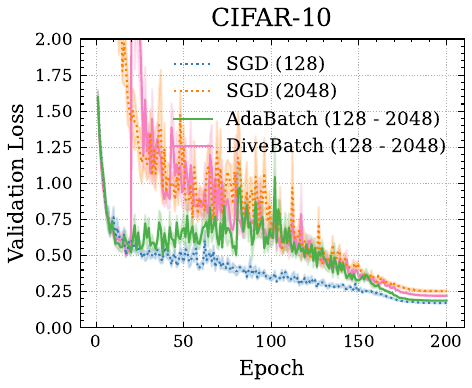}
    \end{subfigure}\hspace{6pt}
    \begin{subfigure}[b]{0.31\textwidth}
        \centering
        \includegraphics[width=\textwidth]{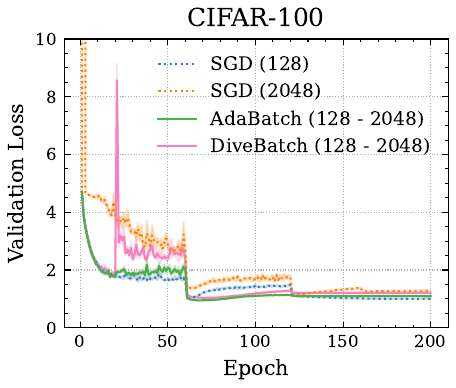}
    \end{subfigure}\hspace{0pt}
    \begin{subfigure}[b]{0.32\textwidth}
    \centering
    \includegraphics[width=\textwidth]{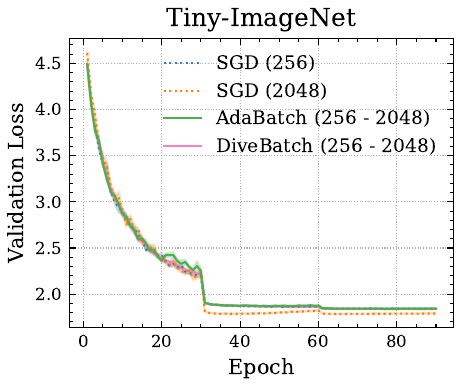}

    \end{subfigure}
    \caption{Validation loss on \textsc{CiFar}-10, \textsc{CiFar}-100 and \textsc{Tiny-ImageNet}. The loss of \ourmethod stabilizes faster than AdaBatch but converges to higher losses. The plot shows the average validation loss over 5 trials.}
    \label{fig:real_combined_loss_rescale}
\end{figure*}

\begin{table*}[ht!]
\centering
\begin{adjustbox}{max width=\textwidth}
\begin{tabular}{llccccc}
\toprule
 &  & \multicolumn{4}{c}{Validation Accuracy at Epochs (\% of Total Training)} & \multicolumn{1}{c}{Time to $\pm$1\% Final Acc.} \\
\cmidrule(lr){3-6} \cmidrule(lr){7-7}
 Dataset & Algorithm & 25\% & 50\% & 75\%  & 100\% (Final) & Avg. Time (Seconds) \\
\midrule
\multirow{4}{*}{\textsc{CiFar}-10} 
 & SGD (128) & 85.13 ± 0.36 & 88.88 ± 0.56 & 92.64 ± 0.20 & 95.50 ± 0.02 & 3188.80\\
 & SGD (2048) & 65.32 ± 2.50 & 79.09 ± 0.67 & 88.10 ± 0.36 & 93.64 ± 0.12 & 2601.14\\
 & AdaBatch (128 - 2048) & 79.47 ± 2.10 & 81.13 ± 2.06 & 90.66 ± 0.50 & 95.05 ± 0.11 & 2694.04\\
 & DiveBatch (128 - 2048) & 72.50 ± 2.56 & 79.44 ± 1.89 & 90.06 ± 0.34 & 94.55 ± 0.10 & \textbf{1073.11}\\
\midrule
\multirow{4}{*}{\textsc{CiFar}-100} 
 & SGD (128) & 54.82 ± 1.22 & 66.26 ± 0.38 & 74.43 ± 0.17 & 74.92 ± 0.23 & 2808.46\\
 & SGD (2048) & 33.36 ± 1.20 & 57.83 ± 0.65 & 68.88 ± 0.36 & 70.75 ± 0.15 & 2151.14\\
 & AdaBatch (128 - 2048) & 49.00 ± 1.19 & 73.00 ± 0.11 & 73.69 ± 0.11 & 73.64 ± 0.13 &1298.14\\
 & DiveBatch (128 - 2048) & 42.78 ± 1.76 & 70.63 ± 0.19 & 71.58 ± 0.23 & 71.53 ± 0.18 &\textbf{1346.40} \\

\midrule
\multirow{4}{*}{\textsc{Tiny-ImageNet}} 
 & SGD (256) & 45.34 ± 0.27 & 54.52 ± 0.23 & 55.40 ± 0.17 & 55.49 ± 0.18 & 9702.65\\
 & SGD (2048) & 44.85 ± 0.18 & 56.91 ± 0.14 & 57.41 ± 0.10 & 57.57 ± 0.15 & 8857.43\\
 & AdaBatch (256 - 2048) & 42.99 ± 0.52 & 54.75 ± 0.07 & 55.55 ± 0.08 & 55.58 ± 0.11 &8764.02 \\
 & DiveBatch (256 - 2048) & 43.93 ± 0.33 & 54.71 ± 0.11 & 55.54 ± 0.09 & 55.60 ± 0.10 &\textbf{8399.57}\\

\bottomrule
\end{tabular}
\end{adjustbox}
\caption{Validation accuracy and standard error for \textsc{CiFar}-10, \textsc{CiFar}-100, and \textsc{Tiny-ImageNet} at different stages of training, alongside the average epochs and time required to reach within ±1\% of the final validation accuracy.}
\label{tab:epoch_acc_time_rescaleTrue}
\end{table*}

\end{document}